\documentclass{article}

% if you need to pass options to natbib, use, e.g.:
%     \PassOptionsToPackage{numbers, compress}{natbib}
% before loading neurips_2019

% ready for submission
%      \usepackage{neurips_2019}

% to compile a preprint version, e.g., for submission to arXiv, add add the
% [preprint] option:
%     \usepackage[preprint]{neurips_2019}

% to compile a camera-ready version, add the [final] option, e.g.:
\usepackage[final, nonatbib]{neurips_2019}

% to avoid loading the natbib package, add option nonatbib:
%     \usepackage[nonatbib]{neurips_2019}

\usepackage[utf8]{inputenc} % allow utf-8 input
\usepackage[T1]{fontenc}    % use 8-bit T1 fonts
\usepackage[colorlinks=True, citecolor=blue]{hyperref}       % hyperlinks
\usepackage{url}            % simple URL typesetting
\usepackage{booktabs}       % professional-quality tables
\usepackage{amsfonts}       % blackboard math symbols
\usepackage{nicefrac}       % compact symbols for 1/2, etc.
\usepackage{microtype}      % microtypography

\usepackage{amsmath, amssymb, amsthm, mathrsfs}
\usepackage{graphicx}
\usepackage{booktabs, multirow}
\usepackage{caption, subcaption}
\usepackage[numbers]{natbib}

% \graphicspath{{figures/}}

\theoremstyle{plain}
\newtheorem{theorem}{Theorem}[section]
\newtheorem{lemma}[theorem]{Lemma}
\theoremstyle{definition}
\newtheorem*{definition}{Definition}
\theoremstyle{remark}
\newtheorem*{remark}{Remark}

\numberwithin{equation}{section}

\renewcommand{\intercal}{\mathsf{T}}
\newcommand{\bx}{{\boldsymbol{x}}}
\newcommand{\by}{{\boldsymbol{y}}}
\newcommand{\bz}{{\boldsymbol{z}}}
\newcommand{\R}{\mathbb{R}}
\newcommand{\cM}{\mathcal{M}}
\newcommand{\cN}{\mathcal{N}}
\newcommand{\cR}{\mathcal{R}}
\newcommand{\poly}{\mathrm{poly}}
\newcommand{\tr}{\mathrm{tr}}
\newcommand{\eqdef}{\stackrel{\mathrm{def}}{=}}

\title{%
  Global Convergence of Gradient Descent \\
  for Deep Linear Residual Networks
}

% The \author macro works with any number of authors. There are two commands
% used to separate the names and addresses of multiple authors: \And and \AND.
%
% Using \And between authors leaves it to LaTeX to determine where to break the
% lines. Using \AND forces a line break at that point. So, if LaTeX puts 3 of 4
% authors names on the first line, and the last on the second line, try using
% \AND instead of \And before the third author name.

\author{%
  Lei Wu\thanks{Equal contribution} \quad Qingcan Wang\footnotemark[1]
  \quad Chao Ma \\
  Program in Applied and Computational Mathematics\\
  Princeton University\\
  Princeton, NJ 08544, USA\\
  \texttt{\{leiwu,qingcanw,chaom\}@princeton.edu}
}

\begin{document}

\maketitle

\begin{abstract}
  We analyze the global convergence of gradient descent for deep linear residual
  networks by proposing a new initialization: zero-asymmetric (ZAS)
  initialization. It is motivated by avoiding stable manifolds of saddle points.
  We prove that under the ZAS initialization, for an arbitrary target matrix,
  gradient descent converges to an $\varepsilon$-optimal point in $O\left( L^3
  \log(1/\varepsilon) \right)$ iterations, which scales polynomially with the
  network depth $L$. Our result and the $\exp(\Omega(L))$ convergence time for the 
  standard initialization (Xavier or near-identity)
  \cite{shamir2018exponential} together demonstrate the importance of the
  residual structure and the initialization in the optimization for deep linear
  neural networks, especially when $L$ is large.
\end{abstract}

\section{Introduction}

It is widely observed that simple gradient-based optimization algorithms are
efficient for training deep neural networks \cite{zhang2016understanding}, whose
landscape is highly non-convex. To explain the efficiency, traditional
optimization theories cannot be directly applied and the special structures of
neural networks must be taken into consideration. Recently many researches are
devoted to this topic \cite{kawaguchi2016deep, zhang2016understanding,
bartlett2018gradient, du2018gradient, du2018deepgradient, allen2019convergence,
zou2018stochastic, song2018mean, rotskoff2018neural}, but
the theoretical understanding is still far from sufficient.

In this paper, we focus on a simplified case: the deep linear neural network
\begin{equation}
  f(\bx; W_1, \dots, W_L) = W_L W_{L-1} \cdots W_1 \bx,
\end{equation}
where $W_1, \dots, W_L$ are the weight matrices and $L$ is the depth. Linear
networks are simple since they can only represent linear transformation, but
they preserve one of the most important aspects of deep neural networks, the
layered structure. Therefore, analysis of linear networks will be helpful for
understanding nonlinear cases. For example, the random orthogonal initialization
proposed in \cite{saxe2013exact} that analyzes the gradient descent dynamics of
deep linear networks was later shown to be useful for training recurrent
networks with long term dependences \cite{vorontsov2017orthogonality}.

Despite the simplicity, the optimization of deep linear neural networks is still
far from being well understood, especially the global convergence.
\cite{shamir2018exponential} proves that the number of iterations required for
convergence could scales exponentially with the depth $L$. The result requires
two conditions: (1) the width of each layer is $1$; (2) the gradient descent
starts from the standard Xavier \cite{glorot2010understanding} or near-identity
\cite{he2016deep} initialization. If these conditions break, the negative
results does not imply that gradient descent cannot efficiently learn deep
linear networks in general.  \cite{du2019width} shows that if the width of every
layer increases with the network depth, gradient descent with the Gaussian
random initialization does find the global minima while the convergence time
only scales polynomially with the depth. Here we attempt to circumvent the
negative result in \cite{shamir2018exponential} by using better initialization
strategies instead of increasing the width.

\paragraph{Our Contributions} 
We propose the \emph{zero-asymmetric (ZAS) initialization}, which initializes
the output layer $W_L$ to be zero and all the other layers $W_l$, $l = 1, \dots,
L-1$ to be identity. So it is a \emph{linear residual network} with all the
residual blocks and the output layer being zero. We then analyze how the
initialization affects the gradient descent dynamics.

\begin{itemize}
  \item We prove that starting from the ZAS initialization, the number of
    iterations required for gradient descent to find an $\varepsilon$-optimal
    point is $O\left(L^3 \log(1/\varepsilon)\right)$. The only requirement for
    the network is that the width of each layer is not less than the input
    dimension and the result applies to arbitrary target matrices.
  \item We numerically compare the gradient descent dynamics between the ZAS and
    the near-identity initialization for multi-dimensional deep linear
    networks. The comparison clearly shows that the convergence of gradient
    descent with the near-identity initialization involves a saddle point escape
    process, while the ZAS initialization never encounters any saddle point
    during the whole optimization process. 
  \item We provide an extension of the ZAS initialization to the nonlinear case.
    Moreover, the numerical experiments justify its superiority compared to the
    standard initializations. 
\end{itemize}

\subsection{Related work}

\paragraph{Linear networks}
The first line of works analyze the whole landscape. The early work
\cite{baldi1989neural} proves that for two-layer linear networks, all the local
minima are also global minima, and this result is extended to deep linear
networks in \cite{kawaguchi2016deep, laurent2018deep}. \cite{hardt2016identity}
provides a simpler proof of this result for deep residual networks, and shows
that the Polyak-\L{}ojasiewicz condition is satisfied in a neighborhood of a
global minimum. However, these results do not imply that gradient descent can
find global minima, and also cannot tell us the number of iterations required
for convergence.

The second line of works directly deal with the trajectory of gradient descent
dynamics, and our work lies in this venue. \cite{saxe2013exact} provides an
analytic analysis to the gradient descent dynamics of linear networks, which
nevertheless does not show that gradient descent can find global minima.
\cite{ji2018gradient} studies the properties of solutions that the gradient
descent converges to, without providing any convergence rate.
\cite{bartlett2018gradient, arora2018convergence} consider the following
simplified objective function for whitened data,
\[
  \cR(W_1, \dots ,W_L) = \frac{1}{2} \|W_L \cdots W_1 - \Phi\|_F^2.
\]

Specifically, \cite{bartlett2018gradient} analyzes the convergence of gradient
descent with the identity initialization: $W_L = \cdots = W_1 = I$, and proves
that if the target matrix $\Phi$ is positive semi-definite or the initial loss
is small enough, a polynomial-time convergence can be guaranteed.
\cite{arora2018convergence} extends the analysis to more general target matrices
by imposing more conditions on the initialization: (1) approximately balance
condition, $\|W_{l+1}^\intercal W_{l+1} - W_l W_l^\intercal\|_F \le \delta$; (2)
rank-deficient condition, $\|W_L \cdots W_1 - \Phi\|_F \le \sigma_{\min}(\Phi) -
c$ for a constant $c > 0$. The condition (2) still requires small initial loss,
thus the convergence is local in nature. As a comparison, we do not impose any
assumption on the target matrix or the initial loss. 

As mentioned above, our work is closely related to \cite{shamir2018exponential},
which proves that for one-dimensional deep linear networks, gradient descent
with the standard Xavier or near-identity initialization requires at least
$\exp(\Omega(L))$ iterations for fitting the target matrix $\Phi = -I$. However,
our result shows that this difficulty can be overcome by adopting a better
initialization. \cite{du2019width} shows that if the width of each layer is
larger than $\Omega(L\log(L))$, then gradient descent converges to global minima
at a rate $O(\log(1/\varepsilon))$. As a comparison, our result only requires
that the width of each layer is not less than the input dimension. 

\paragraph{Nonlinear networks}
\cite{du2018deepgradient, allen2019convergence, zou2018stochastic} establish the
global convergence for deep networks with the width $m \ge \poly(n,L)$, where
$n$ denotes the number of training examples. \cite{e2019analysis} proves a
similar result but for specific neural networks with long-distance skip
connections, which only requires the depth $L \ge \poly(n)$ and the width $m \ge
d+1$, where $d$ is the input dimension.

The ZAS initialization we propose also closely resembles the ``fixup
initialization'' recently proposed in \cite{zhang2018residual}. Therefore, our
result partially provides a theoretical explanation to the efficiency of fixup
initialization for training deep residual networks.

\section{Preliminaries}

Given training data ${\{(\bx_i, \by_i)\}}_{i=1}^n$ where $\bx_i \in \R^{d_\bx}$
and $\by_i \in \R^{d_\by}$, a linear neural network with $L$ layers is defined
as 
\begin{equation}
  f(\bx; W_1, \dots, W_L) = W_L W_{L-1} \cdots W_1 \bx,
\end{equation}
where $W_l \in \R^{d_l \times d_{l-1}}$, $l = 1, \dots, L$ are parameter
matrices, and $d_0 = d_\bx$, $d_L = d_\by$. Then the least-squares loss
\begin{equation}
  \tilde\cR(W_1, \dots, W_L) \eqdef
  \frac{1}{2} \|W_L W_{L-1} \cdots W_2 W_1 X - Y\|_F^2,
  \label{eqn:loss_x}
\end{equation}
where $X = (\bx_1, \bx_2, \dots, \bx_n) \in \R^{d_\bx \times n}$ and $Y =
(\by_1, \by_2, \dots, \by_n) \in \R^{d_\by \times n}$.

Following \cite{bartlett2018gradient, arora2018convergence}, in this paper we
focus on the following simplified objective function
\begin{equation}
  \cR(W_1, \dots, W_L) \eqdef
  \frac{1}{2} \|W_L W_{L-1} \cdots W_2 W_1 - \Phi\|_F^2,
  \label{eqn:loss}
\end{equation}
where $W_l \in \R^{d \times d}$, $l = 1, \dots, L$ and $\Phi \in \R^{d \times
d}$ is the target matrix. Here we assume $d_l = d$, $l = 1, \dots, L$ for
simplicity.

The gradient descent is given by
\begin{equation}
  W_l(t+1) = W_l(t) - \eta \nabla_l \cR(t),
  \quad l = 1, \dots, L,\ t = 0, 1, 2, \dots
  \label{eqn:gd_discrete}
\end{equation}
In the following, we will always use the index $t$ to denote the value of a
variable after the $t$-th iteration. $\nabla_l \cR$ is the gradient of $\cR$
with respect to the weight matrix $W_l$:
\[
  \nabla_l \cR \eqdef \frac{\partial \cR}{\partial W_l}
  = W^\intercal_{L:l+1} (W_{L:1} - \Phi) W^\intercal_{l-1:1},
\]
where $W_{l_2:l_1} \eqdef W_{l_2} W_{l_2-1} \cdots W_{l_1+1} W_{l_1}$. Moreover,
we keep the learning rate $\eta > 0$ fixed for all iterations.

\paragraph{Notations}
In matrix equations, let $I$ and 0 be the $d$-dimensional identity matrix and
zero matrix respectively. Let $\lambda_{\min}(S)$ be the minimal eigenvalue of a
symmetric matrix $S$ and $\sigma_{\min}(A)$ be the minimal singular value of a
square matrix $A$. Let $\|A\|_F$ and $\|A\|_2$ be the Frobenius norm and
$\ell_2$ norm of matrix $A$ respectively. Recall that $A(t)$ denotes the value
of any variable $A$ after the $t$-th iteration, and $\nabla_l \cR$ is the
gradient of $\cR$ with respect to the weight matrix $W_l$. We use standard
notation $O(\cdot)$ and $\Omega(\cdot)$ to hide constants independent of network
depth $L$.

\section{Zero-asymmetric initialization}

In this section, we first describe the zero-asymmetric initialization, and then
illustrate by a simple example why this special initialization is helpful for
optimization.

\begin{definition}
  For deep linear neural network \eqref{eqn:loss}, define the
  \emph{zero-asymmetric (ZAS) initialization} as
  \begin{equation}
    W_l(0) = I,\ l = 1, \dots, L-1, \quad \text{and} \quad W_L(0) = 0.
  \label{eqn:init}
  \end{equation}
\end{definition}

Under the ZAS initialization, the function represented by the network is a zero
matrix. While commonly used initialization such as the Xavier and the
near-identity initialization treats all the layers equally, our initialization
takes the output layer differently. In this sense, we call the initialization
asymmetric.

Let $W_l = I + U_l$, $l = 1, \dots, L-1$, then the linear network has the
residual form
\[
  \cR = \frac{1}{2} \|W_L (I + U_{L-1}) \cdots (I + U_1) - \Phi\|_F^2.
\]
Since $\partial \cR / \partial U_l = \partial \cR / \partial W_l$, the dynamics
will be the same as ZAS if we initialize $U_l(0) = W_L(0) = 0$. Therefore, ZAS
is equivalent to initializing all the residual blocks and the output layer with
zero in a linear residual network. From this perspective, the ZAS initialization
closely resembles the ``fixup initialization'' \cite{zhang2018residual} for
nonlinear ResNets.

\paragraph{Understanding the role of initialization}
Following \cite{shamir2018exponential}, consider the following optimization
problem for one-dimensional linear network with target $\Phi = -1$:
\begin{equation}
  \cR(w_1,w_2,\dots,w_L) = (w_L w_{L-1} \cdots w_1 + 1)^2 / 2.
\end{equation}
The origin $O(0, \dots, 0)$ is a saddle point of $\cR$, so gradient descent with
small initialization, e.g., Xavier initialization, will spend long time
escaping the neighborhood of $O$. In addition,
\[
  \cM = \{(w_1, \dots, w_L): w_1 = w_2 = \cdots = w_L \ge 0\}
\]
is a stable manifold of $O$, i.e., gradient flow starting from any point in
$\cM$ will converge to $O$.  The near-identity initialization introduces
perturbation to leave $\cM$: $w_l(0) \sim \cN\left( 1, \sigma^2 \right)$, $l =
1, \dots, L$ for some small $\sigma$. However, \cite{shamir2018exponential}
proves that it will still be attracted to the neighborhood of $O$, thus cannot
guarantee the polynomial-time converge. As a comparison, the ZAS initialization
breaks the symmetry by initialize the output layer to be 0.

Figure~\ref{fig:illustration} provides a numerical result for depth $L = 2$.
The near-identity initialization (blue curve) spends long time escaping the
saddle region, while the ZAS initialization (red curve) converges to the global
minima without attraction by the saddle point.

\begin{figure}[t]
  \centering
  \includegraphics[width=.4\textwidth]{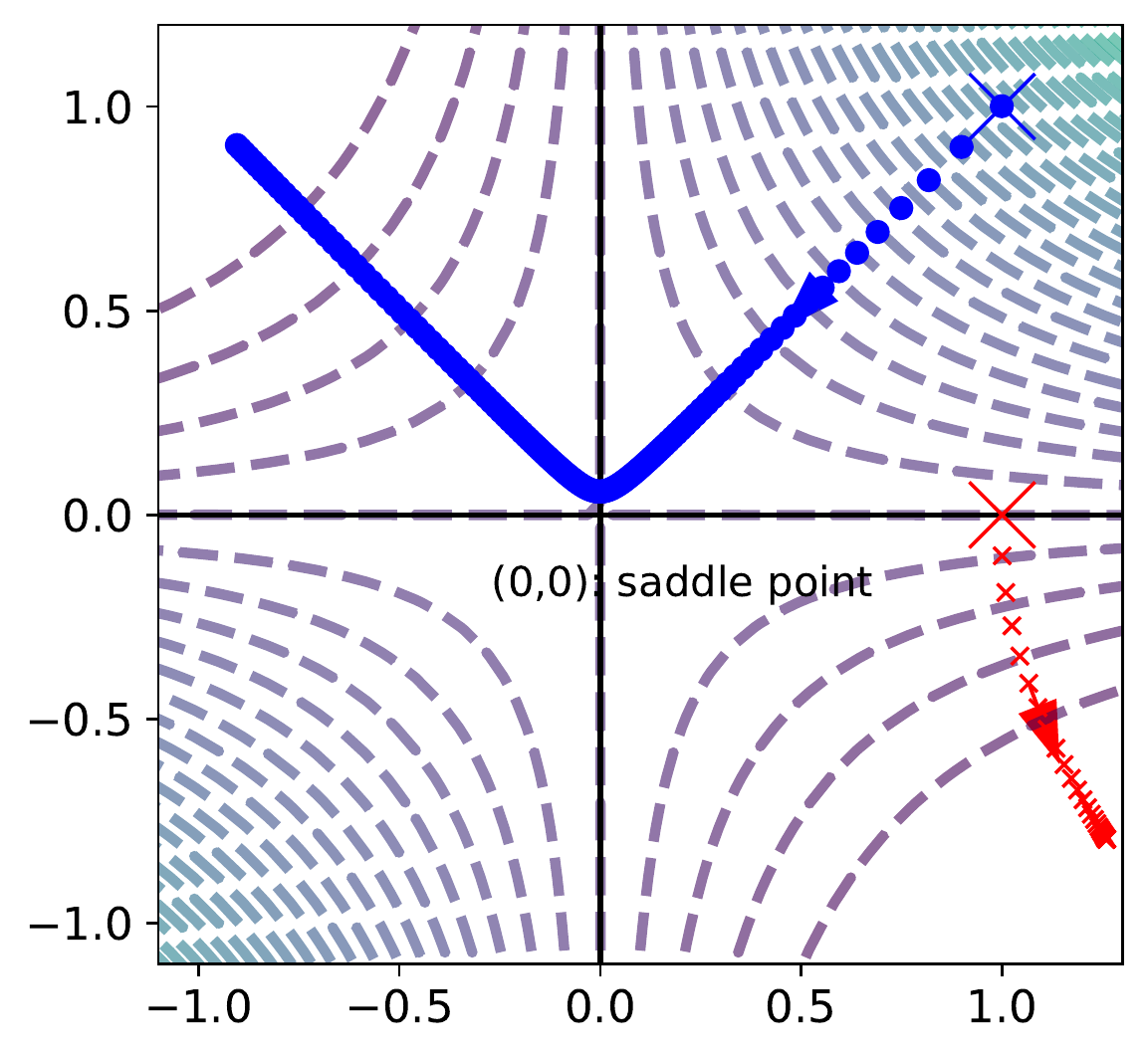}
  \qquad
  \includegraphics[width=.4\textwidth]{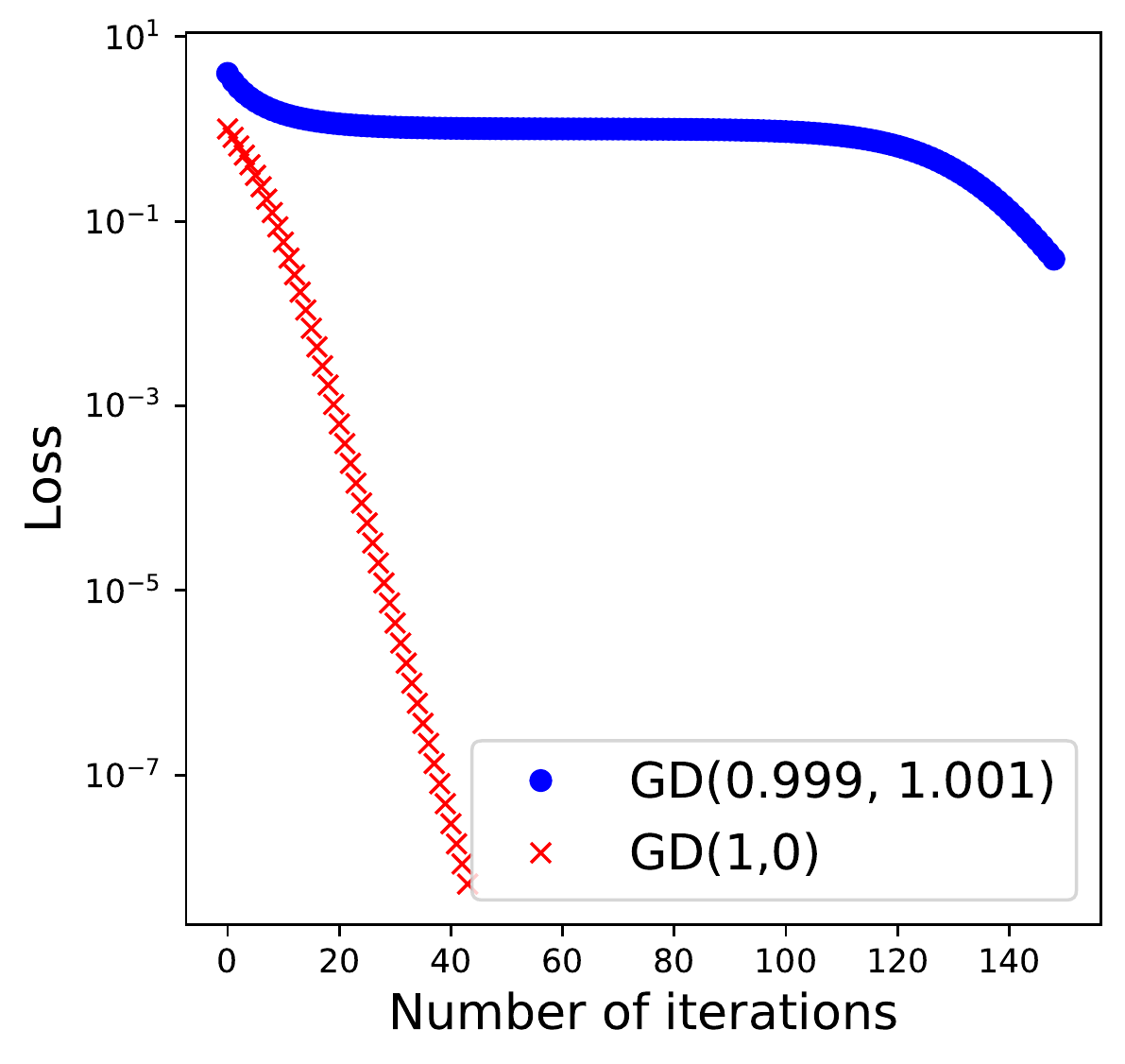}
  \caption{%
    \textbf{Left:} The landscape of the toy model $\cR(w_1, w_2)$ and the two
    gradient descent trajectories. \textbf{Right:} The dynamics of loss for two
    gradient descent trajectories. The blue curve is the gradient descent
    trajectory initialized from $(1 - 0.001, 1 + 0.001)$ (near-identity), and
    the red curve corresponds to the ZAS initialization $(1, 0)$. We observe
    that the blue curve takes a long time in the neighborhood of saddle point
    $(0, 0)$, however the red curve does not. 
  }%
  \label{fig:illustration}
\end{figure}

\section{Main results}%
\label{sec:main}

We first provide and prove the continuous version of our main convergence
result, i.e., the limit dynamics when $\eta \to 0$. Then we give the result for
discrete gradient descent, whose detailed proof is left to the appendix. 

\subsection{Continuous-time gradient descent} 

The continuous-time gradient descent dynamics is given by
\begin{equation}
  \dot W_l(t) = - \nabla_l \cR(t), \quad l = 1, \dots, L,\ t \ge 0.
  \label{eqn:gd_cont}
\end{equation}
In this section, we always denote $\dot{A}(t) = dA(t) / dt$ for any variable $A$
depending on $t$. For the continuous dynamics, we have the following convergence
result.

\begin{theorem}[Continuous-time gradient descent]\label{thm:cont}
  For the deep linear network \eqref{eqn:loss}, the continuous-time gradient
  descent \eqref{eqn:gd_cont} with the zero-asymmetric initialization
  \eqref{eqn:init} satisfies
  \begin{equation}
    \cR(t) \le e^{-2t}\cR(0), \quad t \ge 0,
  \end{equation}
  for any $\Phi \in \R^{d \times d}$ and $L \ge 1$.
\end{theorem}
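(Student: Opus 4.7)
The plan is to show $\dot{\cR}(t) \le -2\cR(t)$ for all $t \ge 0$ and then invoke Gronwall's inequality. Since $\dot{\cR} = -\sum_{l=1}^L \|\nabla_l \cR\|_F^2$, it suffices to produce a single layer whose gradient satisfies $\|\nabla_l \cR\|_F^2 \ge 2\cR$. The natural candidate is $l = L$, because $\nabla_L \cR = (W_{L:1} - \Phi)\, W_{L-1:1}^\intercal$, so $\|\nabla_L \cR\|_F^2 \ge \sigma_{\min}(W_{L-1:1})^2 \cdot \|W_{L:1} - \Phi\|_F^2 = 2\sigma_{\min}(W_{L-1:1})^2 \cR$. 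Hence the whole theorem reduces to proving $\sigma_{\min}(W_{L-1:1}(t)) \ge 1$ for all $t \ge 0$.

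To track this quantity, I would exploit the well-known balance conservation law for the gradient flow of a linear network. A direct computation using $\dot W_l = -W_{L:l+1}^\intercal (W_{L:1} - \Phi) W_{l-1:1}^\intercal$ shows that
\[
  \frac{d}{dt}\bigl(W_{l+1}^\intercal W_{l+1} - W_l W_l^\intercal\bigr) = 0, \qquad l = 1, \dots, L-1,
\]
because the two cross-terms arising from $W_{L:l+1} = W_{L:l+2} W_{l+1}$ cancel. Evaluating this conserved quantity at $t=0$ under the ZAS initialization yields two regimes: for $l \le L-2$ one gets $W_{l+1}^\intercal W_{l+1} = W_l W_l^\intercal$ for all $t$, while for $l = L-1$ one gets the asymmetric identity $W_{L-1}(t) W_{L-1}(t)^\intercal = I + W_L(t)^\intercal W_L(t) \succeq I$.

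From these invariants I would induct downward on $l$: since $W_{L-1}W_{L-1}^\intercal \succeq I$, we have $\sigma_{\min}(W_{L-1}) \ge 1$, and because $W_l W_l^\intercal$ and $W_l^\intercal W_l$ share the same spectrum for square matrices, the equality $W_{l-1} W_{l-1}^\intercal = W_l^\intercal W_l$ propagates $\sigma_{\min}(W_l) \ge 1$ all the way down to $l = 1$. Submultiplicativity of the minimum singular value on square factors then gives $\sigma_{\min}(W_{L-1:1}(t)) \ge \prod_{l=1}^{L-1} \sigma_{\min}(W_l(t)) \ge 1$, which is exactly the bound needed to close the loop.

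Combining the pieces, $\dot \cR \le -\|\nabla_L \cR\|_F^2 \le -2\cR$, and Gronwall delivers $\cR(t) \le e^{-2t}\cR(0)$. The main obstacle is really the verification of the conservation identity together with the careful handling of the asymmetric boundary term at $l = L-1$; once that is in hand, the rest is algebraic. Note that no assumption on $\Phi$ is needed because the lower bound $\sigma_{\min}(W_{L-1:1}) \ge 1$ is driven entirely by the initialization and the conservation law, not by the size of the residual $W_{L:1} - \Phi$.
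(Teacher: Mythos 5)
Your proposal is correct and follows essentially the same route as the paper: the conservation of $W_{l+1}^\intercal W_{l+1} - W_l W_l^\intercal$, the asymmetric initial values ($0$ for $l\le L-2$ and $-I$ for $l=L-1$), the reduction to $\|\nabla_L\cR\|_F^2 \ge 2\cR$ via $\sigma_{\min}(W_{L-1:1})\ge 1$, and Gronwall. The only (cosmetic) difference is in the last algebraic step: the paper telescopes the product to get $W_{L-1:1}W_{L-1:1}^\intercal = (I+W_L^\intercal W_L)^{L-1} \succeq I$ exactly, whereas you propagate $\sigma_{\min}(W_l)\ge 1$ layer by layer using the spectral equality of $W_l^\intercal W_l$ and $W_l W_l^\intercal$ and submultiplicativity of $\sigma_{\min}$ on square factors; both are valid.
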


The theorem above holds for arbitrary $\Phi$, and does not require depth or
width to be large. To prove the theorem, we first define a group of invariant
matrices as following. Note that they also play a key role in the analysis
of~\cite{arora2018convergence}.

\begin{definition}
  For a deep linear network \eqref{eqn:loss}, define the \emph{invariant matrix}
  \begin{equation}
    D_l = W_{l+1}^\intercal W_{l+1} - W_l W_l^\intercal,
    \quad l = 1, 2, \dots, L-1.
    \label{eqn:invar}
  \end{equation}
\end{definition}

\begin{lemma}\label{lem:invar}
  The invariant matrices \eqref{eqn:invar} are indeed invariances under
  continuous-time gradient descent \eqref{eqn:gd_cont}, i.e.,
  $D_l(t) = D_l(0)$ for $l = 1, \dots, L-1$ and $t \ge 0$.
\end{lemma}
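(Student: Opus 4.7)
The plan is to show directly that $\dot{D}_l(t) = 0$ for every $l \in \{1,\dots,L-1\}$ and every $t \ge 0$; once this is established, $D_l(t) = D_l(0)$ follows by integration. Thus the whole task reduces to a computation of the time derivative using the continuous gradient flow $\dot{W}_l = -\nabla_l \cR = -W^\intercal_{L:l+1} E \, W^\intercal_{l-1:1}$, where I abbreviate $E \eqdef W_{L:1} - \Phi$.

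First I would differentiate the defining identity
\[
  D_l = W_{l+1}^\intercal W_{l+1} - W_l W_l^\intercal
\]
by the product rule, giving
\[
  \dot{D}_l = \dot{W}_{l+1}^\intercal W_{l+1} + W_{l+1}^\intercal \dot{W}_{l+1}
  - \dot{W}_l W_l^\intercal - W_l \dot{W}_l^\intercal.
\]
Next I would substitute the gradient formula into the middle two terms. For the second term, using $W_{L:l+2} W_{l+1} = W_{L:l+1}$, I obtain
\[
  W_{l+1}^\intercal \dot{W}_{l+1} = -W_{l+1}^\intercal W^\intercal_{L:l+2} E \, W^\intercal_{l:1}
  = -W^\intercal_{L:l+1} E \, W^\intercal_{l:1}.
\]
For the third term, using $W_l W_{l-1:1} = W_{l:1}$, I likewise obtain
\[
  \dot{W}_l W_l^\intercal = -W^\intercal_{L:l+1} E \, W^\intercal_{l-1:1} W_l^\intercal
  = -W^\intercal_{L:l+1} E \, W^\intercal_{l:1}.
\]
So the two expressions agree, and transposing gives $\dot{W}_{l+1}^\intercal W_{l+1} = W_l \dot{W}_l^\intercal$ as well. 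Hence all four terms cancel in pairs, yielding $\dot{D}_l \equiv 0$.

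There is essentially no obstacle beyond bookkeeping: the crux is recognizing that the adjacent factors $W_{l+1}^\intercal$ and $W_l^\intercal$ in the two contributions to $\dot{D}_l$ exactly complete the missing index in the telescoping product $W^\intercal_{L:l+1}$ and $W^\intercal_{l-1:1}$ to produce the same matrix on both sides. Once this cancellation is spotted, the lemma is immediate; the only care needed is to respect the order of matrix multiplication and the transpose when merging $W_{l+1}^\intercal$ with $W^\intercal_{L:l+2}$ and when merging $W^\intercal_{l-1:1}$ with $W_l^\intercal$.
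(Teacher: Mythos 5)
Your proof is correct and follows essentially the same route as the paper: both establish the key identity $W_{l+1}^\intercal \dot{W}_{l+1} = \dot{W}_l W_l^\intercal$ by merging the adjacent factors into the telescoped products, and then observe that $\dot{D}_l$ is this difference plus its transpose, hence zero. The only cosmetic difference is that you write out all four terms of the product rule explicitly rather than grouping them as a matrix plus its transpose.
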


\begin{proof}
  Recall that
  \[
    \dot W_l = -\nabla_l \cR
    = - W_{L:l+1}^\intercal (W_{L:1} - \Phi) W_{l-1:1}^\intercal,
  \]
  we have
  \[
    \dot W_l W_l^\intercal
    = - W_{L:l+1}^\intercal (W_{L:1} - \Phi) W_{l:1}^\intercal
    = W_{l+1}^\intercal \dot W_{l+1},
  \]
  then
  \[
    \dot D_l
    = \frac{d}{dt} \left[ W_{l+1}^T W_{l+1} - W_l W_l^\intercal \right]
    = \left[ W_{l+1}^\intercal \dot W_{l+1} - \dot W_l W_l^\intercal \right]
    + {\left[
        W_{l+1}^\intercal \dot W_{l+1} - \dot W_l W_l^\intercal
    \right]}^\intercal
    = 0.
  \]
  Therefore, $D_l(t) = D_l(0)$.
\end{proof}

\begin{proof}[Proof of Theorem~\ref{thm:cont}]
  From the ZAS initialization, $D_l(t) = D_l(0) = 0$, $l = 1, \dots,
  L-2$ and $D_{L-1}(t) = D_{L-1}(0) = -I$, i.e.,
  \begin{align*}
    W_l W_l^\intercal & = W_{l+1}^\intercal W_{l+1}, \quad l = 1, \dots, L-2, \\
    W_{L-1} W_{L-1}^\intercal & = I + W_L^\intercal W_L.
  \end{align*}
  So we have
  \begin{align*}
    W_{L-1:1} W_{L-1:1}^\intercal
    & = W_{L-1:2} W_1 W_1^\intercal W_{L-1:2}^\intercal
    = W_{L-1:2} W_2^\intercal W_2 W_{L-1:2}^\intercal\\
    & = W_{L-1:3} {(W_2 W_2^\intercal)}^2 W_{L-1:3}^\intercal\\
    & = \cdots\\
    & = {\left( W_{L-1} W_{L-1}^\intercal \right)}^{L-1}\\
    & = {\left( I + W_L^\intercal W_L \right)}^{L-1},
  \end{align*}
  and
  \begin{align}
    \|\nabla_L \cR\|_F^2
    & = \left\| (W_{L:1} - \Phi) W_{L-1:1}^\intercal \right\|_F^2 \nonumber \\
    & \ge \sigma_{\min}^2 (W_{L-1:1}) \|W_{L:1} - \Phi\|_F^2
    = \lambda_{\min} \left( W_{L-1:1} W_{L-1:1}^\intercal \right) \cdot 2 \cR
    \nonumber \\
    & = \lambda_{\min}
    \left( {\left( I + W_L^\intercal W_L \right)}^{L-1} \right)
    \cdot 2 \cR \ge 2 \cR.
    \label{eqn:grad_last}
  \end{align}
  Then
  \[
    \dot \cR(t)
    = \sum_{l=1}^L \tr\left( \nabla_l^\intercal \cR(t) \dot W_l(t) \right)
    = -\sum_{l=1}^L \|\nabla_l \cR\|_F^2
    \le -\|\nabla_L \cR\|_F^2
    \le -2 \cR.
  \]
  Therefore, $\cR(t) \le e^{-2t} \cR(0)$.
\end{proof}

\begin{remark}
  (1) For rectangular weight matrices $W_l \in \R^{d_l \times d_{l-1}}$, if $d_l
  \ge d_0 = d_\bx$, $l = 1, \dots, L-1$, we can always ignore the redundant
  nodes by initializing $W_L = 0$ and $W_l = \left[ \begin{array}{c c} I_{d_0} &
  0 \\ 0 & 0 \end{array} \right]$, then the proof of Theorem~\ref{thm:cont}
  still holds. (2) For the general square loss $\tilde\cR$ in \eqref{eqn:loss_x}
  with un-whitened data $X$, if $\lambda_X \eqdef \lambda_{\min} \left(
  X^\intercal X \right) > 0$, following the similar proof, we will have
  $\|\nabla_L \tilde\cR\|_F^2 \ge 2 \lambda_X \tilde\cR$, and $\tilde\cR(t) \le
  e^{- 2 \lambda_X t} \tilde\cR(0)$.
\end{remark}

\subsection{Discrete-time gradient descent}

Now we consider the discrete-time gradient descent \eqref{eqn:gd_discrete}. The
main theorem is stated below.

\begin{theorem}[Discrete gradient descent]\label{thm:discrete}
  For deep linear network \eqref{eqn:loss} with the zero-asymmetric
  initialization \eqref{eqn:init} and discrete-time gradient descent
  \eqref{eqn:gd_discrete}, if the learning rate satisfies 
  \[
    \eta \le \min
    \left\{ {\left( 4 L^3 \phi^6 \right)}^{-1},
    {\left( 144 L^2 \phi^4 \right)}^{-1} \right\} 
  \]
  where $\phi = \max \left\{ 2\|\Phi\|_F, 3 L^{-1/2}, 1\right\}$, then we
  have linear convergence
  \begin{equation}
    \cR(t) \le {\left( 1 - \frac{\eta}{2} \right)}^t \cR(0),
    \quad t = 0, 1, 2, \dots
    \label{eqn:converge}
  \end{equation}
\end{theorem}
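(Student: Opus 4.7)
The plan is to mirror the continuous-time argument of Theorem~\ref{thm:cont} in a discrete setting, through a single joint induction on $t$ that simultaneously tracks three things: (i) geometric decay of $\cR(t)$, (ii) near-constancy of the invariants $D_l(t)$, and (iii) uniform operator-norm bounds on the weights. Recall that the continuous proof required only two ingredients: exact invariance of $D_l$ (Lemma~\ref{lem:invar}), which yielded $W_{L-1:1} W_{L-1:1}^\intercal \succeq I$ and hence $\|\nabla_L \cR\|_F^2 \ge 2\cR$, and the energy identity $\dot \cR = -\sum_l \|\nabla_l \cR\|_F^2$. In discrete time, neither is exact, and the proof must absorb $O(\eta^2)$ errors in both the invariant drift and the loss decrement; the bound on $\eta$ in the theorem is precisely what allows this absorption.

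The first step is a one-step descent lemma. Expanding
\[
  W_{L:1}(t+1) = \prod_{l=L}^{1} \bigl(W_l(t) - \eta \nabla_l \cR(t)\bigr),
\]
the $O(\eta)$ contribution to $\cR(t+1)-\cR(t)$ is exactly $-\eta \sum_l \|\nabla_l \cR(t)\|_F^2$, and the higher-order remainder collects cross-products of gradients and current weights. I would bound it in the form
\[
  \cR(t+1) \le \cR(t) - \eta \sum_{l=1}^L \|\nabla_l \cR(t)\|_F^2 + C \eta^2 L^{2} \phi^{4L} \cR(t),
\]
where $\phi$ dominates $\max_l \|W_l(t)\|_2$ and $\|\Phi\|_F$. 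The combinatorics is tedious but routine: bound each cross-product by $\phi$-powers from the weight norms and by $\|\nabla_l \cR\|_F \le \phi^{L-1}\sqrt{2\cR}$ from the explicit gradient formula.

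Next I would set up the induction. The hypothesis at time $t$ is: (H1) $\cR(s) \le (1-\eta/2)^s \cR(0)$ for all $s \le t$; (H2) $\|D_l(s)-D_l(0)\|_2 \le 1/2$ for all $s \le t$, $l=1,\dots,L-1$; (H3) $\|W_l(s)\|_2 \le \phi$ for all $s \le t$, $l=1,\dots,L$. Under (H2), the telescoping calculation of the continuous proof still gives $W_{L-1:1}(s) W_{L-1:1}(s)^\intercal \succeq cI$ for a positive constant $c$, so $\|\nabla_L \cR(s)\|_F^2 \ge 2c\,\cR(s)$; plugging into the descent lemma yields $\cR(t+1)\le(1-\eta/2)\cR(t)$ once $C\eta L^{2}\phi^{4L}$ is dominated by $\eta$, which is what the stated bound on $\eta$ enforces. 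To propagate (H2), observe as in the proof of Lemma~\ref{lem:invar} that the first-order contribution to $D_l(t+1)-D_l(t)$ cancels between layers $l$ and $l+1$, leaving an $O(\eta^2 \|\nabla \cR(t)\|_F^2)$ increment; summing and using $\eta \sum_{s \le t} \|\nabla \cR(s)\|_F^2 \le 2\cR(0)$ (a telescoping consequence of (H1)) gives an $O(\eta)$ cumulative drift, forced below $1/2$ by the $\eta$-bound. Propagating (H3) is similar: $\|W_l(t) - W_l(0)\|_F \le \eta \sum_{s<t} \|\nabla_l \cR(s)\|_F$, and the geometric decay of $\cR(s)$ makes the sum $O(1)$.

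The main obstacle is the bookkeeping inside the one-step descent lemma: every cross-term in the expansion of the $L$-fold product must be controlled by powers of $\phi$ and $L$ without blowing up exponentially in $L$. The polynomial $L^3$ dependence in the learning-rate bound arises because in the residual parametrization $W_l = I + U_l$ only $U_l$ (which starts at zero and stays small) contributes to growth, so $\|W_{L-1:1}\|_2$ remains $O(\phi)$ rather than $\phi^L$; exploiting the ZAS initial structure to keep this dependence polynomial, instead of the naive $\phi^L$, is the delicate part and is precisely where the asymmetric zero initialization pays off.
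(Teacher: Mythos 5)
Your overall architecture --- a joint induction propagating geometric decay of $\cR$, approximate invariance of the $D_l$ (with the first-order drift cancelling so the per-step change is $O(\eta^2)\cR$), and weight bounds, with the cumulative drift controlled via $\sum_s \cR(s) \le 2\cR(0)/\eta$ --- is the same as the paper's. But two of your quantitative induction hypotheses are too weak to close the argument. First, (H3) $\|W_l(s)\|_2 \le \phi$ only gives $\|W_{L-1:1}\|_2 \le \phi^{L-1}$, so your one-step descent remainder $C\eta^2 L^2 \phi^{4L}\cR$ is exponential in $L$ and cannot be absorbed by the stated learning rate $\eta \le (4L^3\phi^6)^{-1}$ unless $\phi = 1$. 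You flag this as ``the delicate part,'' but the mechanism you gesture at (only $U_l = W_l - I$ grows) is not developed and is not how it is resolved. The paper instead proves the much tighter per-layer bound $\alpha^{2(L-1)} < L\phi^2$ with $\alpha = \max_l \|W_l\|_2 \vee 1$, i.e.\ $\|W_l\|_2 = 1 + O(\log L/L)$, so every partial product has norm $O(\sqrt{L}\,\phi)$. That is Lemma~\ref{lem:weight_bound}, and it is the genuinely nontrivial step: approximate invariance forces all layer norms to satisfy $\|W_l\|_2^2 \approx 1 + \beta^2$ with $\beta = \|W_L\|_2$, while $\|W_{L:1}W_{L:1}^\intercal\|_2 \ge \tfrac12 \alpha^{2(L-1)}\beta^2$ stays below $\phi^2$ because the loss decreases; if $\alpha^{2(L-1)} \ge L\phi^2$ then $\beta^2 > 2/L$ and the product norm would exceed $\phi^2$, a contradiction. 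Nothing in your proposal derives a per-layer bound better than $\phi$.

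Second, your (H2) tolerance $\|D_l(s) - D_l(0)\|_2 \le 1/2$ is far too loose to conclude $W_{L-1:1}W_{L-1:1}^\intercal \succeq cI$ for an $L$-independent $c>0$. The telescoping estimate (Lemma~\ref{lem:invar_mult}) gives
\[
  \lambda_{\min}\left(W_{L-1:1}W_{L-1:1}^\intercal\right)
  \ge \lambda_{\min}^{L-1}\left(W_{L-1}W_{L-1}^\intercal\right)
  - \tfrac12 L^2 \alpha^{2(L-2)}\delta,
\]
so the perturbation $\delta$ is amplified by $L^2\alpha^{2(L-2)}$ and the base eigenvalue $\ge 1-\varepsilon$ is raised to the power $L-1$; with $\delta = \varepsilon = 1/2$ the lower bound is negative or exponentially small in $L$. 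The paper needs $\delta \le (2L^3\phi^2)^{-1}$ for $l \le L-2$ and $\|I + D_{L-1}\|_2 \le (4L^2)^{-1}$, and these are exactly what the cumulative-drift computation delivers under the stated $\eta$. So your induction hypotheses must be tightened to $O(L^{-3})$ and $O(L^{-2})$ tolerances and the weight hypothesis replaced by $\alpha^{2(L-1)} < L\phi^2$ before the induction closes; as written, the step ``(H2) still gives $W_{L-1:1}W_{L-1:1}^\intercal \succeq cI$'' fails.
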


Since the learning rate $\eta = O\left( L^{-3} \right)$, the theorem indicates
that gradient descent can achieve $\cR(t) \le \varepsilon$ in $O\left( L^3
\log(1/\varepsilon) \right)$ iterations.

\subsubsection{Overview of the proof}

The following is the proof sketch, and the detailed proof is deferred to the
appendix.

The approach to the discrete-time result is similar to the continuous-time case.
However, the matrices defined in \eqref{eqn:invar} are not exactly invariant,
but change slowly during the training process, which need to be controlled
carefully.

First, we propose the following three conditions, and prove that the first
condition implies the other two.

\begin{description}
  \item[Approximate invariances] For invariant matrices defined
    in \eqref{eqn:invar},
    \begin{equation}
      \|D_l\|_2 = O\left( L^{-3} \right),\ l = 1, \dots, L-2,
      \quad \text{and} \quad
      \|I + D_{L-1}\|_2 = O\left( L^{-2} \right).
      \label{eqn:invar_approx}
    \end{equation}
  \item[Weight bounds] For weight matrices $W_l$,
    \begin{equation}
      \|W_l\|_2 = 1 + O\left( \frac{\log L}{L} \right),\ l = 1, \dots, L-1,
      \quad \text{and} \quad
      \|W_{L-1}\| = O\left( L^{-1/2} \right).
      \label{eqn:weight_bound}
    \end{equation}
  \item[Gradient bound] The gradient of the last layer
    \begin{equation}
      \|\nabla_L \cR\|_F^2 \ge \cR.
      \label{eqn:grad_bound}
    \end{equation}
\end{description}

\begin{lemma}
  The approximate invariances condition \eqref{eqn:invar_approx} implies the
  weight bounds \eqref{eqn:weight_bound} and the gradient
  bound \eqref{eqn:grad_bound}.
\end{lemma}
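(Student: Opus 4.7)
The plan is to mirror the continuous-time proof of Theorem~\ref{thm:cont}, but replace the exact invariances with the approximate ones while carefully tracking the resulting errors. In matrix form the hypotheses read
\[
  W_l W_l^\intercal = W_{l+1}^\intercal W_{l+1} - D_l, \quad 1 \le l \le L-2, \qquad W_{L-1} W_{L-1}^\intercal = I + W_L^\intercal W_L - (I + D_{L-1}),
\]
and the key observation is that $W_{l+1}^\intercal W_{l+1}$ and $W_{l+1} W_{l+1}^\intercal$ have identical spectra, so these identities transport spectral information between neighboring layers at the cost of a small additive error.

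First I would telescope from $l = L-1$ down to $l = 1$. Taking operator norm and minimum eigenvalue in the first identity gives $\|W_l\|_2^2 \le \|W_{l+1}\|_2^2 + \|D_l\|_2$ and $\sigma_{\min}(W_l)^2 \ge \sigma_{\min}(W_{l+1})^2 - \|D_l\|_2$, and summing the $L-2$ errors of order $L^{-3}$ leaves an aggregate slack of $O(L^{-2})$. For the top layer, positive semidefiniteness of $W_L^\intercal W_L$ applied to the second identity yields $W_{L-1} W_{L-1}^\intercal \succeq I - (I + D_{L-1})$, hence $\sigma_{\min}(W_{L-1})^2 \ge 1 - O(L^{-2})$. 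Chaining, $\sigma_{\min}(W_l)^2 \ge 1 - O(L^{-2})$ uniformly for $l = 1, \dots, L-1$. For the matching upper bound the same second identity gives $\|W_{L-1}\|_2^2 \le 1 + \|W_L\|_2^2 + O(L^{-2})$, so combined with the telescoping and the separately maintained bound $\|W_L\|_2 = O(L^{-1/2})$, one obtains $\|W_l\|_2 \le 1 + O(\log L / L)$ for all $l \le L-1$ with generous slack.

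The gradient bound then follows essentially as in the continuous derivation \eqref{eqn:grad_last}. Writing $\nabla_L \cR = (W_{L:1} - \Phi) W_{L-1:1}^\intercal$ and using $\sigma_{\min}(W_{L-1:1}) \ge \prod_{l=1}^{L-1} \sigma_{\min}(W_l)$,
\[
  \|\nabla_L \cR\|_F^2 \ge \sigma_{\min}(W_{L-1:1})^2 \cdot 2 \cR \ge \bigl(1 - O(L^{-2})\bigr)^{L-1} \cdot 2 \cR \ge \cR
\]
once $L$ is large enough that $(1 - cL^{-2})^{L-1} \ge 1/2$, which absorbs the factor of $2$ with room to spare.

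The main obstacle is bookkeeping the $O(L^{-2})$ slack: the $L-2$ additive errors from $D_1, \dots, D_{L-2}$ and the single error from $I + D_{L-1}$ must combine so that, after taking a product of $L-1$ terms of the form $\sigma_{\min}(W_l)^2$, the loss relative to the continuous value $1$ remains only a constant factor. A subtler point is that the invariance condition alone does not pin down $\|W_L\|_2$; the bound $\|W_L\|_2 = O(L^{-1/2})$ must be maintained in tandem with the invariances by the outer induction of Theorem~\ref{thm:discrete}, and the present lemma is best read as certifying the compatibility of these coupled bounds with each other.
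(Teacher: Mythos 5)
Your derivation of the gradient bound is correct and in fact cleaner than the paper's. You chain the minimum singular values layer by layer, using $\sigma_{\min}(W_l)^2 \ge \sigma_{\min}(W_{l+1})^2 - \|D_l\|_2$ and $W_{L-1}W_{L-1}^\intercal \succeq I - (I+D_{L-1})$ to get $\sigma_{\min}(W_l)^2 \ge 1 - O(L^{-2})$ uniformly, and then $\sigma_{\min}(W_{L-1:1}) \ge \prod_l \sigma_{\min}(W_l)$. This needs no upper bound on the weights at all. The paper instead goes through Lemma~\ref{lem:invar_mult}, bounding $\|W_{L-1:1}W_{L-1:1}^\intercal - (W_{L-1}W_{L-1}^\intercal)^{L-1}\|_2 \le \frac12 L^2\alpha^{2(L-2)}\delta$ by induction and then taking $\lambda_{\min}$; that route requires the a priori bound $\alpha^{2(L-1)} < L\phi^2$ to control the error term, so it is strictly more entangled with the weight bounds than your argument.

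The genuine gap is in the weight bounds. You correctly observe that the invariances alone only give $\|W_l\|_2^2 \le 1 + \|W_L\|_2^2 + O(L^{-2})$, so everything hinges on controlling $\beta = \|W_L\|_2$; but you then assume $\beta = O(L^{-1/2})$ is ``separately maintained by the outer induction,'' which is both unjustified and not what happens in the paper. There is no natural way to maintain such a bound directly: summing $\eta\|\nabla_L\cR(s)\|_F \le \eta\alpha^{L-1}\sqrt{2\cR(s)}$ over iterations gives only $\beta = O(\sqrt{L})$, and at convergence $\beta$ is determined implicitly by $\beta^2(1+\beta^2)^{L-1} \approx \|\Phi\|_2^2$, so any bound on it must come from the product $W_{L:1}$, not from tracking $W_L$ itself. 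The paper's Lemma~\ref{lem:weight_bound} closes exactly this hole with a contradiction argument: the invariances force $\bigl|1+\beta^2-\|W_l\|_2^2\bigr| \le \varepsilon + (L-l-1)\delta$ for every $l \le L-1$, so all layers share (up to $O(L^{-2})$) the common norm $\sqrt{1+\beta^2}$; one then shows $\|W_{L:1}\|_2^2 \ge \frac12\alpha^{2(L-1)}\beta^2$, and if $\alpha^{2(L-1)} \ge L\phi^2$ the coupling forces $\beta^2 > 2/L$, whence $\|W_{L:1}\|_2^2 > \phi^2$, contradicting $\|W_{L:1}\|_2 \le \phi$. The only external input is the bound on the end-to-end product $\|W_{L:1}\|_2$, which follows from the monotonically decreasing loss and is what the induction actually carries. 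Without this coupling argument (or some substitute), your proof of the weight bounds is circular, and the weight bounds are needed downstream to control the gradients and the one-step drift of the $D_l$, so the gap cannot be waved away.
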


Second, to show that \eqref{eqn:invar_approx}--\eqref{eqn:grad_bound} always
holds during the training process, we need to estimate the change of invariant
matrix $D_l(t+1) - D_l(t)$ and the decrease of loss $\cR(t+1) - \cR(t)$ in one
step.

\begin{lemma}
  If the weight bounds \eqref{eqn:weight_bound} hold at iteration $t$, then the
  change of invariant matrices after one-step update with learning rate $\eta$
  satisfies
  \begin{gather}
    \|D_l(t+1) - D_l(t)\|_2 = O\left( \eta^2 \right) \cR(t),\ l = 1, \dots, L-2,
    \nonumber \\
    \|D_{L-1}(t+1) - D_{L-1}(t)\|_2 = O\left( \eta^2 L \right) \cR(t).
    \label{eqn:invar_step}
  \end{gather}
\end{lemma}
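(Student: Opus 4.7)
The plan is to mirror the continuous-time invariance argument (Lemma 4.3) while carefully tracking the $O(\eta^2)$ discretization error that prevents $D_l$ from being exactly preserved. Substituting $W_l(t+1) = W_l(t) - \eta \nabla_l \cR(t)$ into $D_l = W_{l+1}^\intercal W_{l+1} - W_l W_l^\intercal$ and expanding produces a first-order bracket together with a quadratic remainder. The first-order bracket vanishes identically: the same matrix identity $\nabla_l \cR\, W_l^\intercal = W_{L:l+1}^\intercal (W_{L:1} - \Phi) W_{l:1}^\intercal = W_{l+1}^\intercal \nabla_{l+1}\cR$ that drove the continuous proof, together with its transpose, causes the four linear terms to cancel in pairs. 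What survives is
\[
  D_l(t+1) - D_l(t)
  = \eta^2 \bigl[ (\nabla_{l+1}\cR)^\intercal \nabla_{l+1}\cR
  - \nabla_l \cR\, (\nabla_l \cR)^\intercal \bigr],
\]
so taking operator norms gives the clean bound $\|D_l(t+1) - D_l(t)\|_2 \le \eta^2 (\|\nabla_{l+1}\cR\|_2^2 + \|\nabla_l \cR\|_2^2)$.

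It then remains to bound the per-layer gradients under the hypothesized weight bounds \eqref{eqn:weight_bound}. Writing $\nabla_l \cR = W_{L:l+1}^\intercal (W_{L:1} - \Phi) W_{l-1:1}^\intercal$ and using $\|W_{L:1} - \Phi\|_F \le \sqrt{2\cR(t)}$, the crucial observation is that for every $l \le L - 1$ the factor $W_{L:l+1}$ contains the small output weight $W_L$, whose norm is $O(L^{-1/2})$ by hypothesis. Squaring this factor produces the extra $L^{-1}$ saving needed to reach $\|\nabla_l \cR\|_2^2 = O(\cR)$ for $l \le L-1$, which combined with the operator-norm bound above yields $\|D_l(t+1) - D_l(t)\|_2 = O(\eta^2) \cR(t)$ for $l \le L-2$. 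For $l = L-1$, one of the two gradients involved is $\nabla_L \cR = (W_{L:1} - \Phi) W_{L-1:1}^\intercal$, which lacks the compensating $W_L$ factor; bounding $\|W_{L-1:1}\|_2^2$ via the chain of weight bounds gives the weaker estimate $\|\nabla_L \cR\|_2^2 = O(L)\,\cR$, accounting precisely for the extra factor of $L$ in the stated bound for $D_{L-1}$.

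The main obstacle lies in making the long-product estimates $\|W_{L-1:l+1}\|_2\,\|W_{l-1:1}\|_2$ consistent with the $L$-independent constants hidden in the final $O(\eta^2)$ bound. The individual bound $\|W_k\|_2 \le 1 + O(\log L/L)$, naively iterated over $L-2$ layers, only gives $L^{O(1)}$, so the argument must exploit the near-isometry structure inherited from the approximate invariances (which produced the weight bounds in the preceding lemma) to telescope the product $W_{L-1:1} W_{L-1:1}^\intercal$ into $(I + W_L^\intercal W_L)^{L-1}$ up to small errors, then apply $\|W_L\|_2^2 = O(1/L)$ to conclude the product is uniformly bounded for the $l \le L-2$ case and at most linear in $L$ when the bare $\nabla_L \cR$ enters.
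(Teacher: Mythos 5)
Your proof is correct and follows essentially the same route as the paper: the first-order terms cancel exactly via the identity $\nabla_l \cR\, W_l^\intercal = W_{l+1}^\intercal \nabla_{l+1}\cR$ and its transpose, leaving the explicit $\eta^2$ remainder $\eta^2\bigl[\nabla_{l+1}^\intercal\cR\,\nabla_{l+1}\cR - \nabla_l\cR\,\nabla_l^\intercal\cR\bigr]$ (you even have the sign of the second term right, where the paper's displayed formula has a harmless typo), and the remainder is then bounded by the per-layer gradient norms exactly as in the paper's gradient-bound lemma, with the $O(L^{-1/2})$ output weight $W_L$ supplying the factor-of-$L$ saving for $l \le L-1$ and $\nabla_L\cR$ accounting for the extra $O(L)$ in the $D_{L-1}$ bound. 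The product-of-norms concern in your last paragraph is resolved in the paper not by re-telescoping inside this lemma but by the fact that the ``weight bounds'' hypothesis, in its precise appendix form, is already the pair of product bounds $\alpha^{2(L-1)} < L\phi^2$ and $\alpha^{2(L-1)}\beta^2 < 2\phi^2$ (themselves derived from the approximate invariances in the preceding lemma), so the direct operator-norm product suffices here.
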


\begin{lemma}
  If the weight bounds \eqref{eqn:weight_bound} and the gradient bound
  \eqref{eqn:grad_bound} hold, and the learning rate $\eta = O\left( L^{-2}
  \right)$, then the loss function
  \begin{equation}
    \cR(t+1) \le \left( 1 - \frac{\eta}{2} \right) \cR(t).
    \label{eqn:loss_step}
  \end{equation}
\end{lemma}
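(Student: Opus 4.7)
The plan is to carry out the natural one-step expansion of $\cR$ along the update direction, isolate the linear-in-$\eta$ descent term, and absorb the higher-order remainder into $\tfrac{\eta}{2}\cR(t)$ using the weight and gradient bounds.

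First I would expand $W_{L:1}(t+1) = \prod_{l=L}^{1}\bigl(W_l(t)-\eta G_l\bigr)$ with $G_l \eqdef \nabla_l \cR(t)$ as a sum over subsets $S \subseteq \{1,\ldots,L\}$: writing $\Delta = W_{L:1}(t+1)-W_{L:1}(t) = \sum_{|S|\ge 1}(-\eta)^{|S|} M_S$, where $M_S$ is the product whose $l$-th factor is $G_l$ for $l\in S$ and $W_l$ otherwise, the identity $\cR(t+1) = \cR(t) + \langle E(t),\Delta\rangle + \tfrac{1}{2}\|\Delta\|_F^2$ with $E(t)=W_{L:1}(t)-\Phi$ makes the structure transparent. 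The $|S|=1$ contribution to $\langle E(t),\Delta\rangle$ equals $-\eta\sum_l \|G_l\|_F^2$ by the adjoint identity $G_l = W_{L:l+1}^{\intercal} E(t) W_{l-1:1}^{\intercal}$, and the gradient bound~\eqref{eqn:grad_bound} makes this at most $-\eta \cR(t)$, giving the desired leading descent.

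Next I would estimate the higher-order remainder $R \eqdef \bigl\langle E(t),\sum_{|S|\ge 2}(-\eta)^{|S|}M_S\bigr\rangle + \tfrac{1}{2}\|\Delta\|_F^2$ and show $R \le \tfrac{\eta}{2}\cR(t)$. For $M_S$ with $|S|=k$ there are $L-k$ weight factors and $k$ gradient factors; the weight bound $\|W_l\|_2 \le 1+O(\log L/L)$ makes any product of up to $L$ such norms at most $\poly(L)$, and combined with $\|W_L\|_2 = O(L^{-1/2})$ and the formula for $G_l$, one obtains $\|G_l\|_F \le \poly(L)\sqrt{\cR(t)}$. Hence $\|M_S\|_F \le \poly(L)\cdot\bigl(\poly(L)\sqrt{\cR(t)}\bigr)^k$. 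Summing over the $\binom{L}{k}\le L^k/k!$ subsets of size $k$ together with the factor $\eta^k$ produces a series whose $k$th term is $\bigl(\eta L\,\poly(L)\sqrt{\cR(t)}\bigr)^k/k!$, which—once the geometric factor is below $1/2$—is dominated by the $k=2$ term, yielding $\bigl\|\sum_{|S|\ge 2}(-\eta)^{|S|}M_S\bigr\|_F = O\bigl(\eta^2 L^2\poly(L)\bigr)\cR(t)$ and $\|\Delta\|_F^2 = O\bigl(\eta^2 L^2\poly(L)\bigr)\cR(t)$. Applying Cauchy--Schwarz to $\langle E(t),\Delta_{>1}\rangle$ and invoking $\cR(t)\le\cR(0)=\tfrac{1}{2}\|\Phi\|_F^2=O(\phi^2)$ (from the outer induction of Theorem~\ref{thm:discrete}) to absorb a spurious $\cR(t)^{3/2}$ into $\cR(t)$, the total remainder is $R \le O(\eta^2 L^2\phi^c)\cR(t)$ for an explicit constant $c$, and the hypothesis $\eta = O(L^{-2}\phi^{-4})$ makes this at most $\tfrac{\eta}{2}\cR(t)$.

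The main technical obstacle is controlling the $2^L$ subset-expansion terms without exponential blow-up: the rescue is that $\|W_l\|_2^L = \poly(L)$ thanks to the $O(\log L/L)$ slack in the weight bound, and that the $1/k!$ inherent in $\binom{L}{k}$, combined with the smallness of $\eta L\sqrt{\cR(t)}$, makes the full expansion behave as a geometric series dominated by its quadratic leading term. A secondary subtlety is that bounding $\langle E(t),\Delta_{>1}\rangle$ naively produces a $\cR(t)^{3/2}$ factor; this is precisely why the outer induction must simultaneously maintain $\cR(t) \le \cR(0)$, and why $\phi = \max\{2\|\Phi\|_F,3L^{-1/2},1\}$ enters the learning-rate condition.
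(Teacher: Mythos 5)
Your proof is correct and follows essentially the same route as the paper's: the same decomposition of $\cR(t+1)-\cR(t)$ into a linear descent term controlled by the gradient bound, higher-order cross terms, and the quadratic term $\tfrac12\|\Delta\|_F^2$, with the higher-order terms controlled by a binomial expansion of the product update and the spurious $\cR(t)^{3/2}$ factor absorbed via $\cR(t)\le\cR(0)$ from the outer induction. The only cosmetic difference is that you index the expansion by subsets while the paper collects terms by powers of $\eta$ (the coefficients $A_k$, $B_k$) and tracks explicit constants in place of your $\poly(L)$ placeholders.
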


With the three lemmas above, we are now ready to prove
Theorem~\ref{thm:discrete}.

\begin{proof}[Proof of Theorem~\ref{thm:discrete} (informal)]
  We do induction on \eqref{eqn:converge} and \eqref{eqn:invar_approx}. Assume
  that they hold for $0, 1, \dots, t$. From the three lemmas above,
  \eqref{eqn:weight_bound}--\eqref{eqn:loss_step} also hold for $0, 1, \dots,
  t$. So the loss function
  \[
    \cR(t+1) \le \left( 1 - \frac{\eta}{2} \right) \cR(t)
    \le {\left( 1 - \frac{\eta}{2} \right)}^{t+1} \cR(0),
  \]
  i.e., \eqref{eqn:converge} holds for $t+1$. Now we have
  \[
    \sum_{s=0}^t \cR(s)
    \le \cR(0) \sum_{s=0}^t {\left( 1 - \frac{\eta}{2} \right)}^s
    \le \frac{2}{\eta} \cR(0) = O\left( \eta^{-1} \right).
  \]
  Recall that the invariant matrices $D_l(0) = 0$, $l=1,\dots,L-2$ and $I +
  D_{L-1}(0) = 0$ at the initialization, and $\eta = O\left( L^{-3} \right)$.
  From \eqref{eqn:invar_step},
  \[
    \|D_l(t+1)\|_2 \le \sum_{s=0}^t \|D_l(s+1) - D_l(s)\|_2
    = O(\eta^2) \sum_{s=0}^t \cR(s)
    = O(\eta) =  O\left( L^{-3} \right).
  \]
  for $l = 1, \dots, L-2$. Similarly, $\|I + D_{L-1}(t+1)\|_2 \le O(\eta L) =
  O\left( L^{-2} \right)$, i.e., \eqref{eqn:invar_approx} holds for $t+1$. Then
  we complete the induction.
\end{proof}

\begin{remark}
  Following the proof sketch, we can actually prove Theorem~\ref{thm:discrete}
  under ``near-ZAS'' initialization with perturbation: $W_l(0) \sim
  \mathcal{N}\left( I, \sigma^2 \right)$, $l = 1, \dots, L-1$ and $W_L(0) \sim
  \mathcal{N}\left( 0, \sigma^2 \right)$, where $\sigma$ is sufficiently small
  such that the approximate invariances condition \eqref{eqn:invar_approx} holds
  at the initialization. Note that the constants hidden in $O(\cdot)$ may depend
  on the target matrix $\Phi$.
\end{remark}

\section{Numerical experiments}

\subsection{Dependence on the depth}

Theorem~\ref{thm:discrete} theoretically shows that the number of iterations
required for convergence is at most $O\left( L^3 \right)$, which holds for any
target matrix in $\R^{d \times d}$. The first experiment examines how this depth
dependence behaves in practice.

In experiments, we generate target matrices in two ways:
\begin{itemize}
  \item Gaussian random matrix: $\Phi = (\phi_{ij}) \in \R^{d \times d}$ with
    $\phi_{ij}$ independently drawn from $\cN(0, 1)$. Both $d = 2$ and $d =
    100$ are considered.
  \item Negative identity matrix: $\Phi = -I \in \R^{d \times d}$. This target
    is adopted from \cite{shamir2018exponential}, which proves that in the case
    $d = 1$, the number of iteration required for convergence under the Xavier
    and the near-identity initialization scales exponentially with the depth
    $L$. Both $d = 1$ and $d = 100$ are considered.
\end{itemize} 

The ZAS initialization \eqref{eqn:init} is applied for linear neural networks
with different depth $L$, and we manually tune the optimal learning rate for
each $L$. As suggested by Theorem~\ref{thm:discrete}, we numerically find that
the optimal learning rate decrease with $L$. 

Figure~\ref{fig:converge1} shows number of iterations required to make the
objective $\cR \le \varepsilon = 10^{-10}$. It is clear to see that the number
of iterations required roughly scale as $O\left( L^\gamma \right)$, where
$\gamma \approx 1/2$ for the negative identity matrix and $\gamma \approx 1$ for
the Gaussian random matrices. These scalings are better than the theoretical
$\gamma = 3$ in Theorem~\ref{thm:discrete}, which is a worst case result.

\begin{figure}[t]
  \centering 
  \includegraphics[width=0.4\textwidth]{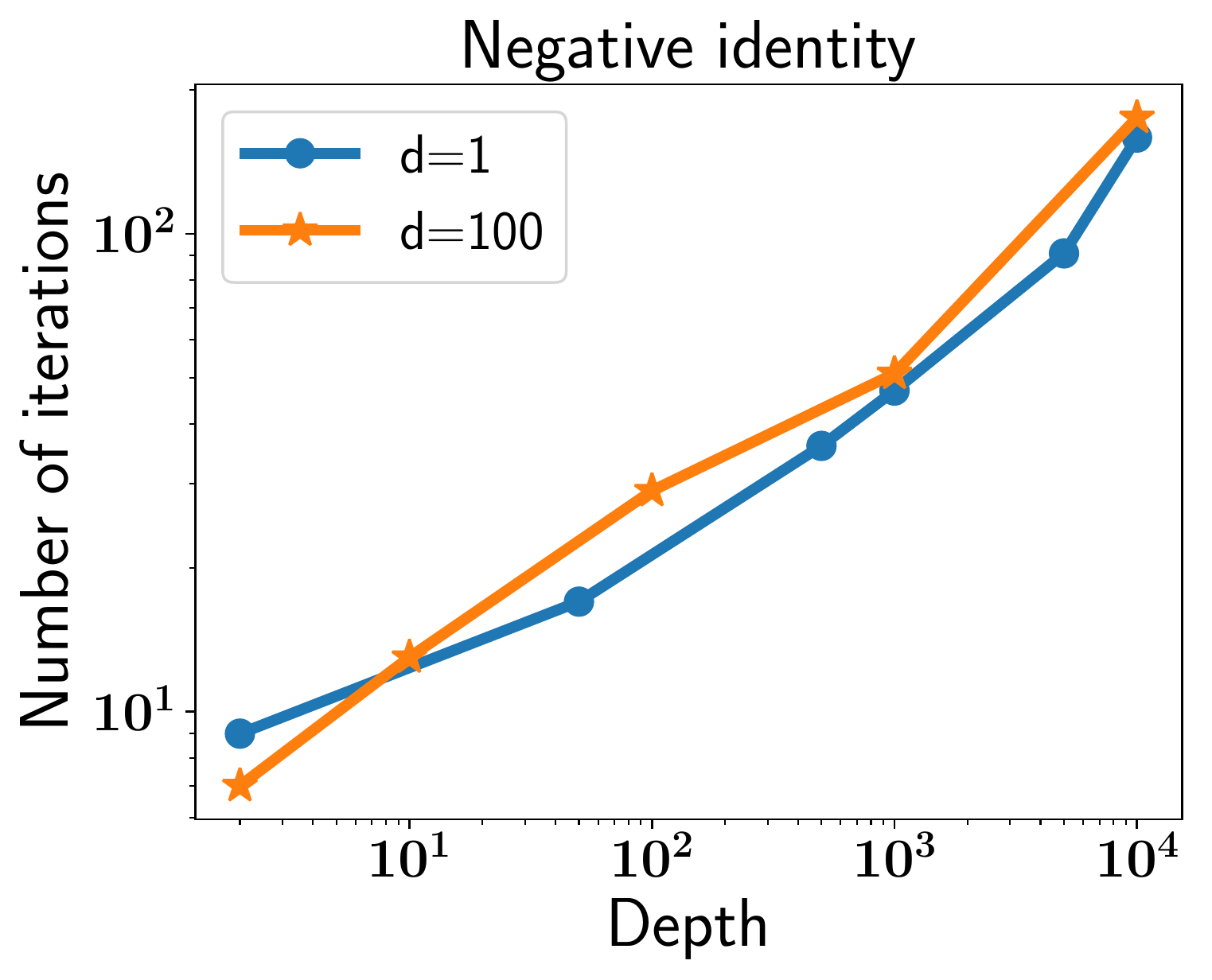}
  \qquad
  \includegraphics[width=0.4\textwidth]{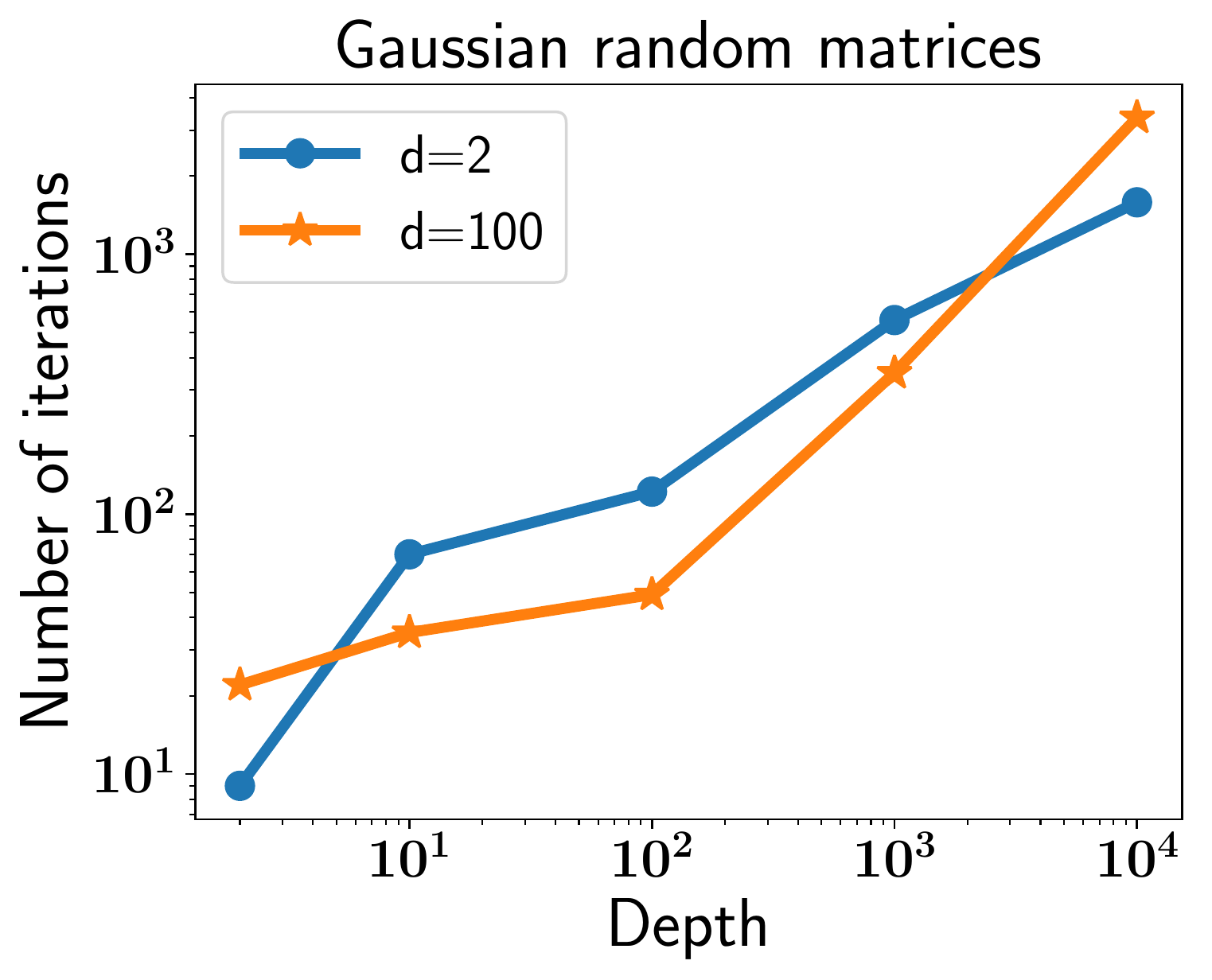}
  \caption{%
    Number of iterations required for the ZAS initialization to reach an
    $\varepsilon$-optimal solution where $\varepsilon = 10^{-10}$. Two type of
    target matrices, negative identity and Gaussian random matrices are
    considered. It is shown that the number of iterations required scales
    polynomially with the network depth.
  }%
  \label{fig:converge1}
\end{figure}

\subsection{Comparison with near-identity initialization in multi-dimensional
cases}

The near-identity initialization initializes each layer by 
\begin{equation}\label{eqn:near-id}
  W_l = I + U_l, \quad {(U_l)}_{ij} \sim \cN(0, 1 / (d L))\ \textrm{i.i.d.},
  \quad l = 1, \dots, L
\end{equation}
where $I$ is the identity matrix. Numerically, it was observed in
\cite{shamir2018exponential} that for multi-dimensional networks ($d = 25$ in
the experiments), gradient descent with the initialization \eqref{eqn:near-id}
requires number of iterations to scale only polynomially with the depth, instead
of exponentially. Here we compare it with the ZAS initialization by fitting
negative identity matrix with 6-layer linear networks. The learning rate $\eta =
0.01$ for both initialization.

Figure~\ref{fig:comparison} shows the dynamics trajectories for both
initializations. It strongly suggests that the ZAS initialization is more
efficient than the near-identity initialization \eqref{eqn:near-id}. Gradient
descent with the near-identity initialization is attracted to a saddle region,
spends a long time escaping that region, and then converges fast to a global
minimum. As a comparison, gradient descent with ZAS initialization does not
encounter any saddle region during the whole optimization process.

\begin{figure}[t]
  \centering
  \includegraphics[width=0.4\textwidth]{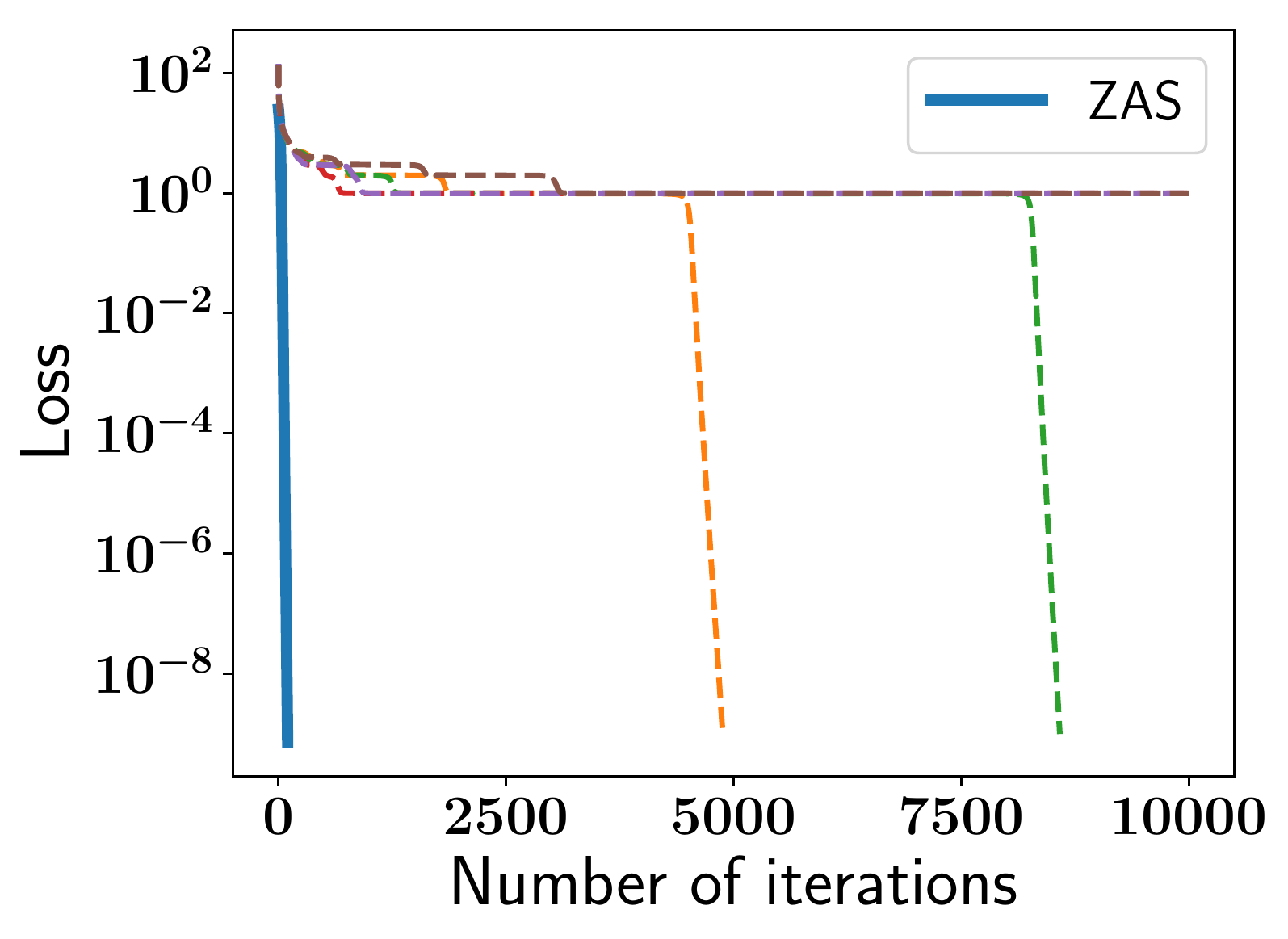}
  \qquad
  \includegraphics[width=0.4\textwidth]{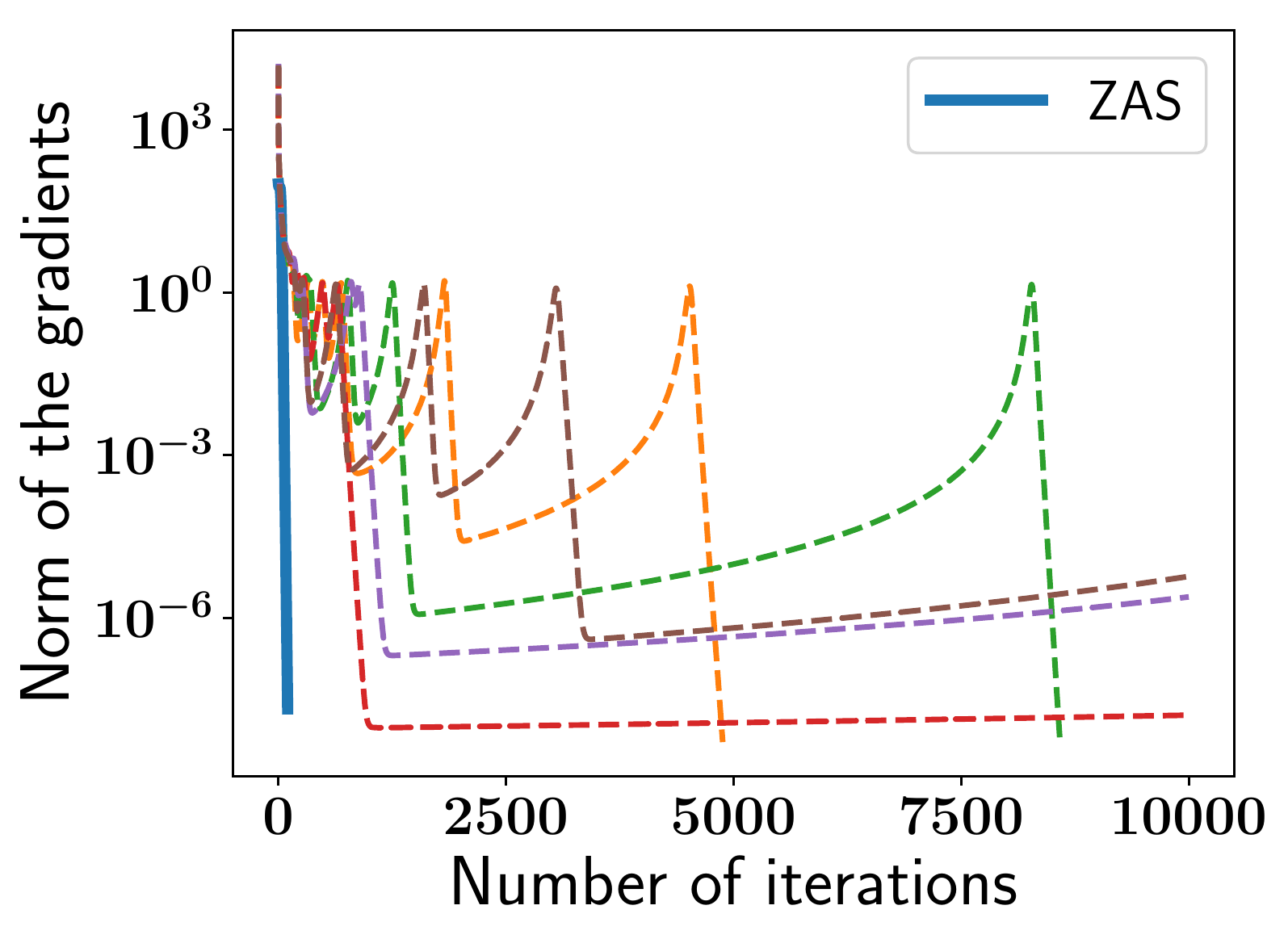}
  \caption{%
    Comparison between the ZAS and the near-identity initialization. The 5
    dashed lines correspond to the multiple runs of gradient descent with the
    near-identity initialization. It is shown that GD with the near-identity
    successfully escape the saddle region only 2 of 5 times in the given number
    of iterations, while the ZAS does not suffer from the attraction of saddle
    point at all.
  }%
  \label{fig:comparison}
\end{figure}

\section{An extension to nonlinear residual networks}

Consider the following residual network $f: \R^d \to \R^{d'}$:
\begin{align}
  \bz_0 & = V_0 \bx, \nonumber \\ 
  \bz_l & = \bz_{l-1} + U_l \sigma(V_l \bz_{l-1}), \quad l = 1, \dots, L,
  \nonumber \\
  f(\bx) & = U_{L+1} \bz_L,
  \label{eqn:resnet}
\end{align}
where $V_0 \in \R^{D \times d}$, $U_l \in \R^{D \times m}$, $V_l \in \R^{m
\times D}$ and $U_{L+1} \in \R^{d' \times D}$; $d$ is the input dimension, $d'$
is the output dimension, $m$ is the width of the residual blocks and $D$ is the
width of skip connections. 

For the nonlinear residual network \eqref{eqn:resnet}, we propose the following
\emph{modified ZAS (mZAS) initialization}:
\begin{gather}
  U_l = 0, \quad l = 1, 2, \dots, L+1, \nonumber \\
  {(V_l)}_{ij} \sim \cN(0, 1/D)\ \textrm{i.i.d.,} \quad l = 0, 1, \dots, L. 
  \label{eqn:mZAS}
\end{gather}

We test two types of initialization: (1) standard Xavier initialization; (2)
mZAS initialization \eqref{eqn:mZAS}. The experiments are conducted on
Fashion-MNIST \cite{xiao2017fashion}, where we select 1000 training samples
forming the new training set to speed up the computation. Depth $L=100, 200,
2000, 10000$ are tested, and the learning rate for each depth is tuned to the
achieve the fastest convergence. The results are displayed in
Figure~\ref{fig:nonlinear}. 

It is shown that mZAS initialization always outperforms the Xavier
initialization. Moreover, gradient descent with mZAS initialization is even able
to successfully optimize a 10000-layer residual network. It clearly demonstrates
that the ZAS-type initialization can be helpful for optimizing deep nonlinear
residual networks.

\begin{figure}[t]
  \centering
  \includegraphics[width=.4\textwidth]{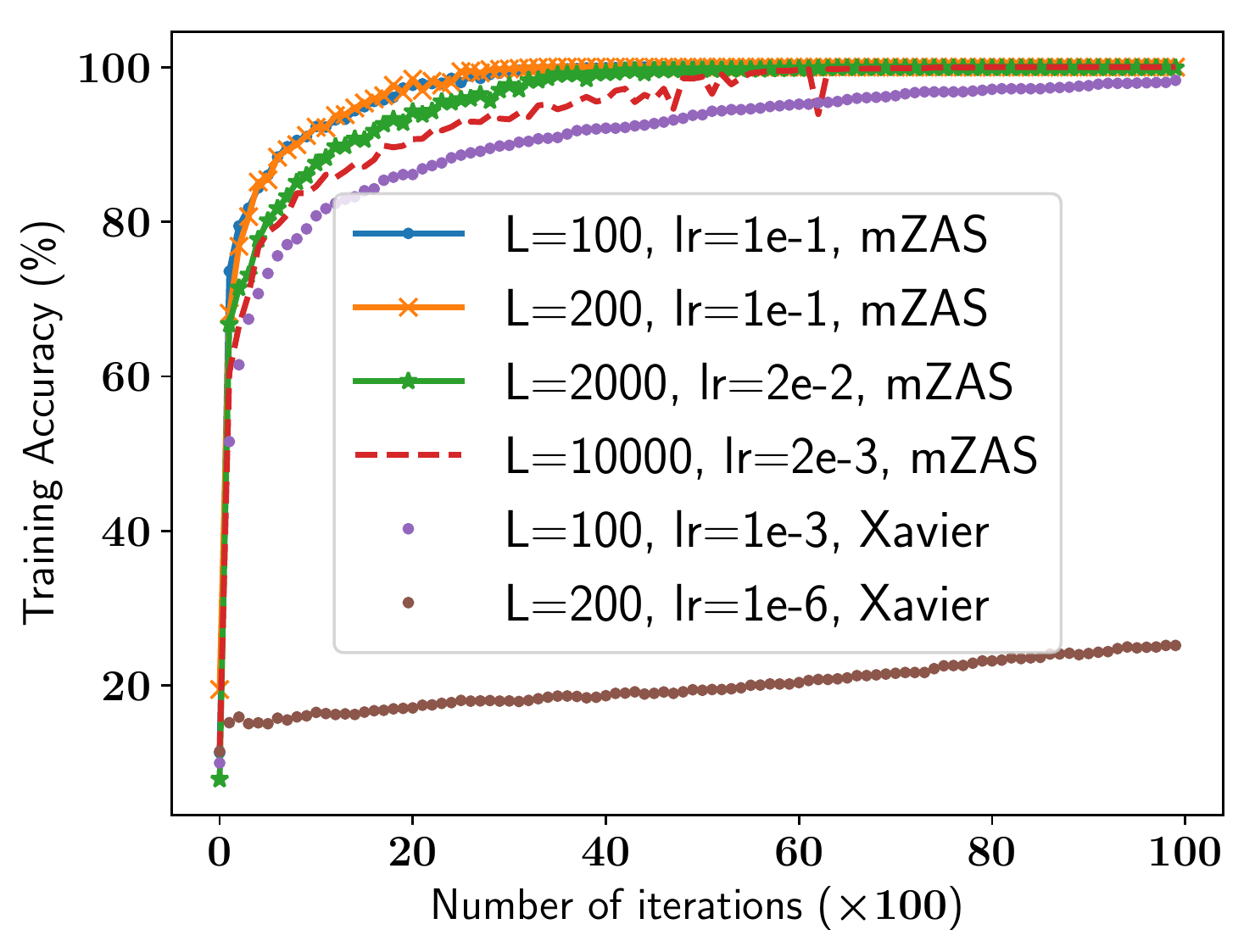}
  \caption{%
    For the nonlinear residual network and Fashion-MNIST dataset, the mZAS
    initialization outperforms the Xavier initialization. The latter blows up for
    depth $L = 2000, 10000$. The learning rates are tuned to achieve the fastest
    convergence.
  }%
  \label{fig:nonlinear}
\end{figure}

\section{Conclusion}

In this paper we propose the ZAS initialization for deep linear residual
network, under which gradient descent converges to global minima for arbitrary
target matrices with linear rate.  Moreover, the rate only scales polynomially
with the network depth. Numerical experiments show that the ZAS initialization
indeed avoids the attraction of saddle points, comparing to the near-identity
initialization. This type of initialization may be extended to the analysis of
deep nonlinear residual networks, which we leave as future work.

\subsubsection*{Acknowledgments}
We are grateful to Prof.~Weinan E for helpful discussions, and the anonymous
reviewers for valuable comments and suggestions. This work is supported in part
by a gift to Princeton University from iFlytek and the ONR grant
N00014-13-1-0338.

\bibliographystyle{plain}
\bibliography{dl_ref}

\begin{thebibliography}{10}

\bibitem{allen2019convergence}
Zeyuan Allen-Zhu, Yuanzhi Li, and Zhao Song.
\newblock A convergence theory for deep learning via over-parameterization.
\newblock In {\em International Conference on Machine Learning}, pages
  242--252, 2019.

\bibitem{arora2018convergence}
Sanjeev Arora, Nadav Cohen, Noah Golowich, and Wei Hu.
\newblock A convergence analysis of gradient descent for deep linear neural
  networks.
\newblock In {\em International Conference on Learning Representations}, 2019.

\bibitem{baldi1989neural}
Pierre Baldi and Kurt Hornik.
\newblock Neural networks and principal component analysis: Learning from
  examples without local minima.
\newblock {\em Neural networks}, 2(1):53--58, 1989.

\bibitem{bartlett2018gradient}
Peter Bartlett, Dave Helmbold, and Phil Long.
\newblock Gradient descent with identity initialization efficiently learns
  positive definite linear transformations.
\newblock In {\em International Conference on Machine Learning}, pages
  520--529, 2018.

\bibitem{du2019width}
Simon~S. Du and Wei Hu.
\newblock Width provably matters in optimization for deep linear neural
  networks.
\newblock {\em arXiv preprint arXiv:1901.08572}, 2019.

\bibitem{du2018deepgradient}
Simon~S. Du, Jason~D Lee, Haochuan Li, Liwei Wang, and Xiyu Zhai.
\newblock Gradient descent finds global minima of deep neural networks.
\newblock {\em arXiv preprint arXiv:1811.03804}, 2018.

\bibitem{du2018gradient}
Simon~S. Du, Xiyu Zhai, Barnabas Poczos, and Aarti Singh.
\newblock Gradient descent provably optimizes over-parameterized neural
  networks.
\newblock In {\em International Conference on Learning Representations}, 2019.

\bibitem{e2019analysis}
Weinan E, Chao Ma, Qingcan Wang, and Lei Wu.
\newblock Analysis of the gradient descent algorithm for a deep neural network
  model with skip-connections.
\newblock {\em arXiv preprint arXiv:1904.05263}, 2019.

\bibitem{glorot2010understanding}
Xavier Glorot and Yoshua Bengio.
\newblock Understanding the difficulty of training deep feedforward neural
  networks.
\newblock In {\em Aistats}, volume~9, pages 249--256, 2010.

\bibitem{hardt2016identity}
Moritz Hardt and Tengyu Ma.
\newblock Identity matters in deep learning.
\newblock In {\em International Conference on Learning Representations}, 2017.

\bibitem{he2016deep}
Kaiming He, Xiangyu Zhang, Shaoqing Ren, and Jian Sun.
\newblock Deep residual learning for image recognition.
\newblock In {\em Proceedings of the IEEE Conference on Computer Vision and
  Pattern Recognition}, pages 770--778, 2016.

\bibitem{ji2018gradient}
Ziwei Ji and Matus Telgarsky.
\newblock Gradient descent aligns the layers of deep linear networks.
\newblock {\em arXiv preprint arXiv:1810.02032}, 2018.

\bibitem{kawaguchi2016deep}
Kenji Kawaguchi.
\newblock Deep learning without poor local minima.
\newblock In {\em Advances In Neural Information Processing Systems}, pages
  586--594, 2016.

\bibitem{laurent2018deep}
Thomas Laurent and James Brecht.
\newblock Deep linear networks with arbitrary loss: All local minima are
  global.
\newblock In {\em International Conference on Machine Learning}, pages
  2908--2913, 2018.

\bibitem{song2018mean}
Song Mei, Andrea Montanari, and Phan-Minh Nguyen.
\newblock A mean field view of the landscape of two-layers neural networks.
\newblock In {\em Proceedings of the National Academy of Sciences}, volume 115,
  pages E7665--E7671, 2018.

\bibitem{rotskoff2018neural}
Grant~M. Rotskoff and Eric Vanden-Eijnden.
\newblock Neural networks as interacting particle systems: Asymptotic convexity
  of the loss landscape and universal scaling of the approximation error.
\newblock {\em arXiv preprint arXiv:1805.00915}, 2018.

\bibitem{saxe2013exact}
Andrew~M. Saxe, James~L. McClelland, and Surya Ganguli.
\newblock Exact solutions to the nonlinear dynamics of learning in deep linear
  neural networks.
\newblock {\em arXiv preprint arXiv:1312.6120}, 2013.

\bibitem{shamir2018exponential}
Ohad Shamir.
\newblock Exponential convergence time of gradient descent for one-dimensional
  deep linear neural networks.
\newblock {\em arXiv preprint arXiv:1809.08587}, 2018.

\bibitem{vorontsov2017orthogonality}
Eugene Vorontsov, Chiheb Trabelsi, Samuel Kadoury, and Chris Pal.
\newblock On orthogonality and learning recurrent networks with long term
  dependencies.
\newblock In {\em International Conference on Machine Learning}, pages
  3570--3578, 2017.

\bibitem{xiao2017fashion}
Han Xiao, Kashif Rasul, and Roland Vollgraf.
\newblock Fashion-mnist: A novel image dataset for benchmarking machine
  learning algorithms.
\newblock {\em arXiv preprint arXiv:1708.07747}, 2017.

\bibitem{zhang2016understanding}
Chiyuan Zhang, Samy Bengio, Moritz Hardt, Benjamin Recht, and Oriol Vinyals.
\newblock Understanding deep learning requires rethinking generalization.
\newblock In {\em International Conference on Learning Representations}, 2017.

\bibitem{zhang2018residual}
Hongyi Zhang, Yann~N. Dauphin, and Tengyu Ma.
\newblock Fixup initialization: Residual learning without normalization.
\newblock In {\em International Conference on Learning Representations}, 2019.

\bibitem{zou2018stochastic}
Difan Zou, Yuan Cao, Dongruo Zhou, and Quanquan Gu.
\newblock Stochastic gradient descent optimizes over-parameterized deep {ReLU}
  networks.
\newblock {\em arXiv preprint arXiv:1811.08888}, 2018.

\end{thebibliography}

\newpage
\appendix

\section{Proof of the discrete-time gradient descent}%
\label{sec:discrete}

\subsection{Invariant matrices}

The zero-asymmetry initialization \eqref{eqn:init} gives $D_{l}(0)=0$,
$l=1,\dots,L-2$ and $I+D_{L-1}(0)=0$. Lemma~\ref{lem:invar} proved that
$D_{l}$'s are indeed invariances in continuous gradient descent, and then the
gradient $\|\nabla_{L}\cR\|_{F}^{2}$ can be lower bounded by the current loss
$\cR$ \eqref{eqn:grad_last}. Here we will show that
$\|\nabla_{L}\cR\|_{F}^{2}\ge\cR$ still holds if $D_{l}$'s are only
\emph{approximately invariant}, i.e., $D_{l}$, $l=1,\dots,L-2$ and $I+D_{L-1}$
are close to 0.

\begin{lemma}\label{lem:invar_mult}
Assume that the weight matrices $\|W_{l}\|_{2}\le\alpha$,
$l=1,\dots,L-1$ and the invariant matrices $\|D_{l}\|_{2}\le\delta$,
$l=1,\dots,L-2$, then
\begin{equation}
\left\|W_{L-1:1}W_{L-1:1}^{\intercal}-{\left(W_{L-1}W_{L-1}^{\intercal}\right)}^{L-1}\right\|_{2}\le\frac{1}{2}L^{2}\alpha^{2(L-2)}\delta.
\end{equation}
\end{lemma}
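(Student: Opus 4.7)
The plan is to prove the bound via a telescoping decomposition that interpolates between $W_{L-1:1}W_{L-1:1}^{\intercal}$ and ${(W_{L-1}W_{L-1}^{\intercal})}^{L-1}$, mirroring the chain of substitutions used in the continuous-time proof of Theorem~\ref{thm:cont} but now tracking the error introduced at each step by the nonzero $D_{l}$.

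First, I would define the interpolating sequence
\[
  T_{k} \eqdef W_{L-1:k+1} {\left( W_{k} W_{k}^{\intercal} \right)}^{k}
  W_{L-1:k+1}^{\intercal}, \quad k = 1, 2, \dots, L-1,
\]
so that $T_{1} = W_{L-1:1} W_{L-1:1}^{\intercal}$ and $T_{L-1} = {(W_{L-1}W_{L-1}^{\intercal})}^{L-1}$ (interpreting $W_{L-1:L}$ as the empty product, i.e.\ $I$). The key algebraic identity is
\[
  T_{k+1}
  = W_{L-1:k+2} {\left( W_{k+1} W_{k+1}^{\intercal} \right)}^{k+1}
  W_{L-1:k+2}^{\intercal}
  = W_{L-1:k+1} {\left( W_{k+1}^{\intercal} W_{k+1} \right)}^{k}
  W_{L-1:k+1}^{\intercal},
\]
obtained by pulling out one factor of $W_{k+1}W_{k+1}^{\intercal}$ on each side and absorbing it into $W_{L-1:k+2}\,W_{k+1} = W_{L-1:k+1}$. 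With this identity, the definition of $D_{k}$ gives $W_{k+1}^{\intercal} W_{k+1} = W_{k} W_{k}^{\intercal} + D_{k}$, so
\[
  T_{k+1} - T_{k}
  = W_{L-1:k+1}
    \left[ {\left( W_{k} W_{k}^{\intercal} + D_{k} \right)}^{k}
           - {\left( W_{k} W_{k}^{\intercal} \right)}^{k} \right]
    W_{L-1:k+1}^{\intercal}.
\]

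Second, I would control the bracketed difference using the standard telescoping expansion
\[
  A^{k} - B^{k} = \sum_{j=0}^{k-1} A^{j} (A-B) B^{k-1-j},
\]
applied with $A = W_{k} W_{k}^{\intercal} + D_{k}$ and $B = W_{k} W_{k}^{\intercal}$. Since $\|W_{k}W_{k}^{\intercal}\|_{2} \le \alpha^{2}$ and (using the invariance identity again) $\|W_{k}W_{k}^{\intercal} + D_{k}\|_{2} = \|W_{k+1}^{\intercal}W_{k+1}\|_{2} \le \alpha^{2}$, both $\|A\|_{2}$ and $\|B\|_{2}$ are bounded by $\alpha^{2}$, so
\[
  \left\| {\left( W_{k}W_{k}^{\intercal} + D_{k} \right)}^{k}
        - {\left( W_{k}W_{k}^{\intercal} \right)}^{k} \right\|_{2}
  \le k \, \alpha^{2(k-1)} \delta.
\]
Combining with $\|W_{L-1:k+1}\|_{2} \le \alpha^{L-1-k}$ yields $\|T_{k+1}-T_{k}\|_{2} \le k \, \alpha^{2(L-2)} \delta$.

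Finally, summing a telescoping series over $k = 1, \dots, L-2$ gives
\[
  \|T_{1} - T_{L-1}\|_{2}
  \le \sum_{k=1}^{L-2} k \, \alpha^{2(L-2)} \delta
  = \frac{(L-2)(L-1)}{2} \, \alpha^{2(L-2)} \delta
  \le \frac{1}{2} L^{2} \alpha^{2(L-2)} \delta,
\]
which is the desired bound. The main obstacle I expect is the algebraic manipulation in the first step — specifically verifying the rewriting $T_{k+1} = W_{L-1:k+1}(W_{k+1}^{\intercal}W_{k+1})^{k} W_{L-1:k+1}^{\intercal}$, which must be done carefully because the substitution moves one factor of $W_{k+1}$ from each side of the central power into the outer product. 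Everything else is a clean bound once this rewriting is in place.
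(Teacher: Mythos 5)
Your proof is correct and is essentially the paper's argument in unrolled form: the paper proves $\bigl\|W_{l:1}W_{l:1}^{\intercal}-(W_{l}W_{l}^{\intercal})^{l}\bigr\|_{2}\le\frac{l(l-1)}{2}\alpha^{2(l-1)}\delta$ by induction on $l$, using the same identity $(W_{l+1}W_{l+1}^{\intercal})^{l+1}=W_{l+1}(W_{l+1}^{\intercal}W_{l+1})^{l}W_{l+1}^{\intercal}$ and the same telescoping bound $\|A^{k}-B^{k}\|_{2}\le k\alpha^{2(k-1)}\delta$ that you apply to each $T_{k+1}-T_{k}$. Unrolling that induction reproduces exactly your interpolating sequence and yields the identical constant $\frac{(L-1)(L-2)}{2}\alpha^{2(L-2)}\delta$, so the two write-ups differ only in presentation.
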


\begin{proof}
We will proof the following statement by induction
\begin{equation}
\left\|W_{l:1}W_{l:1}^{\intercal}-{\left(W_{l}W_{l}^{\intercal}\right)}^{l}\right\|_{2}\le\frac{l(l-1)}{2}\alpha^{2(l-1)}\delta,\quad l=1,\dots,L-1.
\label{eqn:induc_invar_mult}
\end{equation}

The statement holds for $l=1$ obviously. Assume that the statement
holds for $l$, now consider $l+1$,
\begin{align*}
 & \left\|W_{l+1:1} W_{l+1:1}^{\intercal}-{\left(W_{l+1}W_{l+1}^{\intercal}\right)}^{l+1}\right\|_{2}\\
 & =\left\|W_{l+1}\left[W_{l:1}W_{l:1}^{\intercal}-{\left(W_{l}W_{l}^{\intercal}\right)}^{l}+{\left(W_{l}W_{l}^{\intercal}\right)}^{l}-{\left(W_{l+1}^{\intercal}W_{l+1}\right)}^{l}\right]W_{l+1}^{\intercal}\right\|_{2}\\
 & \le\|W_{l+1}\|_{2}\left[\left\|W_{l:1}W_{l:1}^{\intercal}-{\left(W_{l}W_{l}^{\intercal}\right)}^{l}\right\|_{2}+\left\|{\left(W_{l}W_{l}^{\intercal}\right)}^{l}-{\left(W_{l+1}^{\intercal}W_{l+1}\right)}^{l}\right\|_{2}\right]\|W_{l+1}^{\intercal}\|_{2}\\
 & \le\alpha^{2}\left[\frac{l(l-1)}{2}\alpha^{2(l-1)}\delta+\left\|{\left(W_{l}W_{l}^{\intercal}\right)}^{l}-{\left(W_{l+1}^{\intercal}W_{l+1}\right)}^{l}\right\|_{2}\right],
\end{align*}
and
\begin{align*}
 & \left\|{\left(W_{l}W_{l}^{\intercal}\right)}^{l}-{\left(W_{l+1}^{\intercal}W_{l+1}\right)}^{l}\right\|_{2}\\
 & =\left\|\sum_{k=0}^{l-1}{\left(W_{l}W_{l}^{\intercal}\right)}^{l-1-k}\left(W_{l}W_{l}^{\intercal}-W_{l+1}^{\intercal}W_{l+1}\right){\left(W_{l+1}^{\intercal}W_{l+1}\right)}^{k}\right\|_{2}\\
 & \le\sum_{k=0}^{l-1}\left\|W_{l}W_{l}^{\intercal}\right\|_{2}^{l-1-k}\left\|W_{l}W_{l}^{\intercal}-W_{l+1}^{\intercal}W_{l+1}\right\|_{2}\left\|W_{l+1}^{\intercal}W_{l+1}\right\|_{2}^{k}\\
 & \le\sum_{k=0}^{l-1}\alpha^{2(l-1-k)}\delta\alpha^{2k}=l\alpha^{2(l-1)}\delta,
\end{align*}
thus
\[
\left\|W_{l+1:1}W_{l+1:1}^{\intercal}-{\left(W_{l+1}W_{l+1}^{\intercal}\right)}^{l+1}\right\|_{2}\le\alpha^{2}\left[\frac{l(l-1)}{2}\alpha^{2(l-1)}\delta+l\alpha^{2(l-1)}\delta\right]=\frac{l(l+1)}{2}\alpha^{2l}\delta.
\]
So the statement \eqref{eqn:induc_invar_mult} also holds for $l+1$,
and we complete the proof of the lemma.
\end{proof}

\begin{lemma}\label{lem:grad_last}
Assume that the weight matrices $\|W_{l}\|_{2}\le\alpha$,
$l=1,\dots,L-1$, where $1\le\alpha^{2(L-1)}<L\phi^{2}$ for some
$\phi>0$; assume that the invariant matrices $\|D_{l}\|_{2}\le\delta$,
$l=1,\dots,L-2$ and $\|I+D_{L-1}\|_{2}\le\varepsilon$, where $\delta\le{\left(2L^{3}\phi^{2}\right)}^{-1}$
and $\varepsilon\le{\left(4L^2\right)}^{-1}$. Then $\|\nabla_L \cR\|_{F}^{2}\ge\cR$.
\end{lemma}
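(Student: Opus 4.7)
The plan is to mimic the continuous-time proof of Theorem~\ref{thm:cont}, but replace the exact identity $W_{L-1:1} W_{L-1:1}^\intercal = (I + W_L^\intercal W_L)^{L-1}$ with an approximate version controlled by Lemma~\ref{lem:invar_mult} and by the bound $\|I + D_{L-1}\|_2 \le \varepsilon$. Specifically, exactly as in \eqref{eqn:grad_last} we have
\[
  \|\nabla_L \cR\|_F^2
  \ge \sigma_{\min}^2(W_{L-1:1}) \cdot 2\cR
  = \lambda_{\min}\!\left( W_{L-1:1} W_{L-1:1}^\intercal \right) \cdot 2\cR,
\]
so it suffices to prove $\lambda_{\min}( W_{L-1:1} W_{L-1:1}^\intercal ) \ge 1/2$.

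To that end, I will first approximate $W_{L-1:1}W_{L-1:1}^\intercal$ by $(W_{L-1}W_{L-1}^\intercal)^{L-1}$ via Lemma~\ref{lem:invar_mult}, which gives
\[
  \lambda_{\min}\!\left( W_{L-1:1} W_{L-1:1}^\intercal \right)
  \ge \lambda_{\min}\!\left( (W_{L-1} W_{L-1}^\intercal)^{L-1} \right)
  - \tfrac{1}{2} L^2 \alpha^{2(L-2)} \delta.
\]
Next I will lower-bound the main term. By definition of $D_{L-1}$,
\[
  W_{L-1} W_{L-1}^\intercal = W_L^\intercal W_L - D_{L-1}
  = W_L^\intercal W_L + I - (I + D_{L-1}),
\]
so the hypothesis $\|I + D_{L-1}\|_2 \le \varepsilon$ together with $W_L^\intercal W_L \succeq 0$ yields $W_{L-1} W_{L-1}^\intercal \succeq (1-\varepsilon) I$, and therefore $\lambda_{\min}((W_{L-1}W_{L-1}^\intercal)^{L-1}) \ge (1-\varepsilon)^{L-1}$.

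It then remains to verify, under the quantitative assumptions of the lemma, that $(1-\varepsilon)^{L-1} \ge 3/4$ while $\tfrac12 L^2 \alpha^{2(L-2)} \delta \le 1/4$. For the first inequality, Bernoulli gives $(1-\varepsilon)^{L-1} \ge 1-(L-1)\varepsilon \ge 1 - 1/(4L) \ge 3/4$. For the second, since $\alpha \ge 1$ (because $\alpha^{2(L-1)} \ge 1$), I can bound $\alpha^{2(L-2)} \le \alpha^{2(L-1)} \le L\phi^2$, so $\tfrac12 L^2 \alpha^{2(L-2)} \delta \le \tfrac12 L^3 \phi^2 \cdot (2L^3\phi^2)^{-1} = 1/4$. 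Combining the two yields $\lambda_{\min}(W_{L-1:1}W_{L-1:1}^\intercal) \ge 3/4 - 1/4 = 1/2$, and hence $\|\nabla_L \cR\|_F^2 \ge \cR$.

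The only nontrivial step is the very first, the reduction to $(W_{L-1}W_{L-1}^\intercal)^{L-1}$: Lemma~\ref{lem:invar_mult} does all the heavy lifting, and the rest is just a careful bookkeeping exercise that matches the magnitudes of $\delta$ and $\varepsilon$ to the right powers of $L$ and $\alpha$. I therefore expect no real obstacle beyond making sure the two error terms (telescoping the invariant deviations across $L-1$ layers, and the $(L-1)$-th power of the near-identity $W_{L-1}W_{L-1}^\intercal$) are each separately controlled by a constant fraction of $1$.
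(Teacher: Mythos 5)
Your proposal is correct and follows essentially the same route as the paper's proof: reduce to lower-bounding $\lambda_{\min}\left(W_{L-1:1}W_{L-1:1}^{\intercal}\right)$, invoke Lemma~\ref{lem:invar_mult} to pass to ${\left(W_{L-1}W_{L-1}^{\intercal}\right)}^{L-1}$, use $\|I+D_{L-1}\|_2\le\varepsilon$ to get $W_{L-1}W_{L-1}^{\intercal}\succeq(1-\varepsilon)I$, and then check that the two error terms are each at most $1/4$ under the stated bounds on $\delta$ and $\varepsilon$. The arithmetic matches the paper's (which bounds $2\left[{(1-\varepsilon)}^{L-1}-\tfrac12 L^{2}\alpha^{2(L-2)}\delta\right]\ge 1$ in one line), so no further changes are needed.
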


\begin{proof}
From Lemma~\ref{lem:invar_mult},
\[
\lambda_{\min}\left(W_{L-1:1}W_{L-1:1}^{\intercal}\right)\ge\lambda_{\min}^{L-1}\left(W_{L-1}W_{L-1}^{\intercal}\right)-\frac{1}{2}L^{2}\alpha^{2(L-2)}\delta.
\]
Since $\left\|I+W_{L}^{\intercal}W_{L}-W_{L-1}W_{L-1}^{\intercal}\right\|_{2}\le\varepsilon$,
\[
\lambda_{\min}(W_{L-1}W_{L-1}^{\intercal})\ge\lambda_{\min}\left(I+W_{L}^{\intercal}W_{L}\right)-\varepsilon\ge1-\varepsilon.
\]
Similar to \eqref{eqn:grad_last}, we have
\begin{multline*}
\|\nabla_{L}\cR\|_{F}^{2}\ge2\lambda_{\min}\left(W_{L-1:1}W_{L-1:1}^{\intercal}\right)\cR\\
\ge2\left[{(1-\varepsilon)}^{L-1}-\frac{1}{2}L^{2}\alpha^{2(L-2)}\delta\right]\cR\ge2\left[1-\frac{L-1}{4L^{2}}-\frac{1}{2}L^{2}\cdot L\phi^{2}\cdot\frac{1}{2L^{3}\phi^{2}}\right]\cR\ge\cR.
\end{multline*}
\end{proof}

In addition, if $D_{l}$'s are approximately invariant, we can bound
the weights $\|W_{l}\|_{2}$.

\begin{lemma}\label{lem:weight_bound}
Let $\alpha=\max_{1\le l\le L-1}\|W_{l}\|_{2}\vee1$,
$\beta=\|W_{L}\|_{2}$ and $\phi=\max\left\{ \|W_{L:1}\|_{2},\frac{e}{\sqrt{L}},1\right\} $.
Assume that the invariant matrices $\|D_{l}\|_{2}\le\delta$, $l=1,\dots,L-2$
and $\|I+D_{L-1}\|_{2}\le\varepsilon$, where $\delta\le{\left(2L^{3}\phi^{2}\right)}^{-1}$
and $\varepsilon\le{\left(4L^2\right)}^{-1}$. Then
\begin{equation}
\alpha^{2(L-1)}<L\phi^{2},\quad\alpha^{2(L-1)}\beta^{2}<2\phi^{2}.
\end{equation}
\end{lemma}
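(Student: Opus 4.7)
The plan is to exploit the approximate invariances to establish that all $\|W_l\|_2^2$ for $l=1,\dots,L-1$ lie within $O(L^{-2})$ of $1+\beta^2$, while using Lemma~\ref{lem:invar_mult} together with the $(L-1)$-th invariance to identify $W_{L:1} W_{L:1}^\intercal$ with $(W_L W_L^\intercal)(I+W_L W_L^\intercal)^{L-1}$ up to a subleading error. Combined with the a priori bound $\|W_{L:1}\|_2 \le \phi$, this pins $\beta^2(1+\beta^2)^{L-1}$ below $\phi^2$, from which both conclusions drop out after a short case split on $\beta^2$.

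First I would telescope: since $W_l W_l^\intercal - W_{l+1}^\intercal W_{l+1} = -D_l$ with $\|D_l\|_2 \le \delta$, taking operator norms gives $\bigl|\,\|W_l\|_2^2 - \|W_{l+1}\|_2^2\,\bigr| \le \delta$ for $l \le L-2$, and the $(L-1)$-th invariance analogously yields $\bigl|\,\|W_{L-1}\|_2^2 - (1+\beta^2)\,\bigr| \le \varepsilon$. Chaining these estimates gives $\alpha^2 \le 1+\beta^2+\varepsilon+(L-2)\delta \le 1 + \beta^2 + \tfrac{3}{4L^2}$ under the hypotheses on $\delta,\varepsilon$ (using $\phi \ge 1$), so raising to the $(L-1)$-th power costs only a factor $(1+O(L^{-1}))$ over $(1+\beta^2)^{L-1}$. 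Next I would write $W_{L:1} W_{L:1}^\intercal = W_L (W_{L-1:1}W_{L-1:1}^\intercal) W_L^\intercal$ and apply Lemma~\ref{lem:invar_mult} to replace $W_{L-1:1}W_{L-1:1}^\intercal$ by $(W_{L-1}W_{L-1}^\intercal)^{L-1}$ at cost $\tfrac{1}{2}L^2\alpha^{2(L-2)}\delta$; a polynomial perturbation bound using $\|W_{L-1}W_{L-1}^\intercal - (I+W_L^\intercal W_L)\|_2 \le \varepsilon$ then replaces $(W_{L-1}W_{L-1}^\intercal)^{L-1}$ by $(I+W_L^\intercal W_L)^{L-1}$ at cost $O(L\alpha^{2(L-2)}\varepsilon)$. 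Since $W_L f(W_L^\intercal W_L)W_L^\intercal = (W_L W_L^\intercal)f(W_L W_L^\intercal)$ for polynomial $f$, the leading matrix has top eigenvalue $\beta^2(1+\beta^2)^{L-1}$, and $\|W_{L:1}\|_2 \le \phi$ delivers
\[
  \beta^2(1+\beta^2)^{L-1} \le \phi^2 + O\!\left(\alpha^{2(L-2)}\beta^2/L\right).
\]

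A short case split on $\beta^2$ then closes the argument. If $\beta^2 \le 1/L$, then $(1+\beta^2)^{L-1} \le e$, and the assumption $\phi \ge e/\sqrt{L}$ gives $L\phi^2 \ge e^2 > e\cdot(1+O(L^{-1}))$, so $\alpha^{2(L-1)} < L\phi^2$. If $\beta^2 > 1/L$, the display above rearranges to $(1+\beta^2)^{L-1} \le L(\phi^2 + \text{error})$, and the $(1+O(L^{-1}))$ slack from step one still lets $\alpha^{2(L-1)}$ sit strictly below $L\phi^2$. The second conclusion $\alpha^{2(L-1)}\beta^2 < 2\phi^2$ follows by multiplying through by $\beta^2$ and absorbing the small errors into the factor of $2$. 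The main obstacle I expect is the apparent circularity --- the error bound from Lemma~\ref{lem:invar_mult} already contains $\alpha^{2(L-2)}$, the very quantity we are trying to control --- but the tight scaling $\delta \le (2L^3\phi^2)^{-1}$ handles it in a single pass: feeding the crude bound $\alpha^{2(L-2)} \le (1+\beta^2+O(L^{-2}))^{L-2}$ from step one back into the error keeps it subleading, so no genuine bootstrap is needed, only careful tracking of the explicit constants.
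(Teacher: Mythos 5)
Your setup matches the paper's: telescoping the invariances to get $\bigl|\alpha^2 - (1+\beta^2)\bigr| \le \varepsilon + (L-2)\delta$, applying Lemma~\ref{lem:invar_mult} to pass from $W_{L-1:1}W_{L-1:1}^\intercal$ to $(W_{L-1}W_{L-1}^\intercal)^{L-1}$ and then to $(I+W_L^\intercal W_L)^{L-1}$, and comparing against $\|W_{L:1}\|_2 \le \phi$. This correctly yields the second conclusion $\alpha^{2(L-1)}\beta^2 < 2\phi^2$, and your remark about the non-circularity of the $\alpha^{2(L-2)}$ error term is also fine.

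The gap is in Case 2 of your argument for $\alpha^{2(L-1)} < L\phi^2$. When $\beta^2 > 1/L$, dividing the display by $\beta^2$ gives $(1+\beta^2)^{L-1} \le L(\phi^2 + \mathrm{err})$, and combining with $\alpha^{2(L-1)} \le (1+O(L^{-1}))(1+\beta^2)^{L-1}$ from step one produces $\alpha^{2(L-1)} \le (1+O(L^{-1}))\,L(\phi^2+\mathrm{err})$. Every correction factor here exceeds $1$, so this bound sits \emph{above} $L\phi^2$, not below it; there is no ``slack'' to exploit --- the $(1+O(L^{-1}))$ factor works against you, and since $\beta^2$ may be arbitrarily close to the threshold $1/L$, the division step provides no quantitative gap either. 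The paper closes this by arguing by contradiction instead: assuming $\alpha^{2(L-1)} \ge L\phi^2$, the inequality $\alpha^2 > 1 + \frac{\log(L\phi^2)}{L-1}$ together with $\log(L\phi^2) \ge 2$ (this is precisely where the constant $e$ in $\phi \ge e/\sqrt{L}$ is used) forces $\beta^2 > 2/L$ --- note the $2$, not $1$ --- whence $\frac{1}{2}\alpha^{2(L-1)}\beta^2 > \frac{1}{2}\cdot L\phi^2\cdot\frac{2}{L} = \phi^2$, contradicting $\|W_{L:1}W_{L:1}^\intercal\|_2 \le \phi^2$. The factor-of-two margin between the threshold $2/L$ and the $\frac{1}{2}$ lost in the operator-norm lower bound is exactly what makes the strict inequality close; your threshold $1/L$ provides no such margin. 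A repaired version of your direct case split would need a threshold strictly between $1/L$ and $2/L$ with careful constant tracking, and even then the constants are tight for small $L$, so the contradiction route is the cleaner fix.
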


\begin{proof}
We first use the invariant matrices to bound the difference between
$\|W_{l}\|_{2}$. Since
\begin{multline*}
\left\|I+D_{L-1}\right\|_{2}=\left\|I+W_{L}^{\intercal}W_{L}-W_{L-1}W_{L-1}^{\intercal}\right\|_{2}\\
\ge\left|\left\|I+W_{L}^{\intercal}W_{L}\right\|_{2}-\left\|W_{L-1}W_{L-1}^{\intercal}\right\|_{2}\right|=\left|1+\|W_{L}\|_{2}^{2}-\|W_{L-1}\|_{2}^{2}\right|,
\end{multline*}
we have $\left|1+\beta^{2}-\|W_{L-1}\|_{2}^{2}\right|\le\varepsilon$.
In addition,
\[
\|D_{l}\|_{2}=\left\|W_{l+1}^{\intercal}W_{l+1}-W_{l}W_{l}^{\intercal}\right\|_{2}\ge\left|\left\|W_{l+1}^{\intercal}W_{l+1}\right\|_{2}-\left\|W_{l}W_{l}^{\intercal}\right\|_{2}\right|=\left|\|W_{l+1}\|_{2}^{2}-\|W_{l}\|_{2}^{2}\right|
\]
for $l=1,\dots,L-2$, then $\left|1+\beta^{2}-\|W_{l}\|_{2}^{2}\right|\le\varepsilon+(L-l-1)\delta$,
thus $\left|1+\beta^{2}-\alpha^{2}\right|\le\varepsilon+(L-2)\delta$.

From Lemma~\ref{lem:invar_mult},
\begin{align*}
W_{L:1}W_{L:1}^{\intercal} & =W_{L}\left[W_{L-1:1}W_{L-1:1}^{\intercal}\right]W_{L}^{\intercal}\\
 & \succeq W_{L}\left[{\left(W_{L-1}W_{L-1}^{\intercal}\right)}^{L-1}-\frac{1}{2}\alpha^{2(L-2)}L^{2}\delta I\right]W_{L}^{\intercal}\\
 & \succeq W_{L}\left[{\left(I+W_{L}^{\intercal}W_{L}-\delta I\right)}^{L-1}-\frac{1}{2}\alpha^{2(L-2)}L^{2}\delta I\right]W_{L}^{\intercal},
\end{align*}
where $A\succeq B$ means the matrix $A-B$ is positive semi-definite. So
\begin{align*}
\left\|W_{L:1}W_{L:1}^{\intercal}\right\|_{2} & \ge\left\|W_{L}\left[{\left(I+W_{L}^{\intercal}W_{L}-\delta I\right)}^{L-1}-\frac{1}{2}\alpha^{2(L-2)}L^{2}\delta I\right]W_{L}^{\intercal}\right\|_{2}\\
 & =\beta^{2}\left[{\left(1+\beta^{2}-\varepsilon\right)}^{L-1}-\frac{1}{2}\alpha^{2(L-2)}L^{2}\delta\right]\\
 & \ge\beta^{2}\left[{\left(\alpha^{2}-2\varepsilon-(L-2)\delta\right)}^{L-1}-\frac{1}{2}\alpha^{2(L-2)}L^{2}\delta\right]\\
 & \ge\beta^{2}\left[\alpha^{2(L-1)}-(L-1)(2\varepsilon+(L-2)\delta)-\frac{1}{2}\alpha^{2(L-2)}L^{2}\delta\right],\\
 & \ge\beta^{2}\left[\alpha^{2(L-1)}-\frac{{(L-1)}^{2}}{L^{3}}-\frac{1}{4L}\alpha^{2(L-2)}\right]\\
 & \ge\frac{1}{2}\alpha^{2(L-1)}\beta^{2}
\end{align*}
since $\delta\le{\left(2L^{3}\right)}^{-1}$ and $\varepsilon\le{\left(4L^{2}\right)}^{-1}$.
Therefore, $\alpha^{2(L-1)}\beta^{2}\le2\left\|W_{L:1}W_{L:1}^{\intercal}\right\|_{2}\le2\phi^{2}$.

Finally, assume that $\alpha^{2(L-1)}\ge L\phi^{2}$, then
\begin{gather*}
\alpha^{2}\ge{\left(L\phi^{2}\right)}^{1/(L-1)}=\exp\left[\frac{\log\left(L\phi^{2}\right)}{L-1}\right]>1+\frac{\log\left(L\phi^{2}\right)}{L-1},\\
\beta^{2}\ge\frac{\log\left(L\phi^{2}\right)}{L-1}-\varepsilon-(L-2)\delta\ge\frac{2}{L-1}-\frac{1}{4L^{2}}-\frac{L-2}{2L^{3}}>\frac{2}{L},
\end{gather*}
where $\log(L\phi^{2})\ge2$ comes from $\phi\ge \frac{e}{\sqrt L}$. Thus
\[
\left\|W_{L:1}W_{L:1}^{\intercal}\right\|_{2}\ge\frac{1}{2}\alpha^{2(L-1)}\beta^{2}>\frac{1}{2}\cdot L\phi^{2}\cdot\frac{2}{L}=\phi^{2},
\]
which is a contradiction! Therefore $\alpha^{2(L-1)}<L\phi^{2}$,
and we complete the proof of the lemma.
\end{proof}

\subsection{One-step analysis}

We denote the one-step update as
\[
W_{l}^{+}=W_{l}-\eta\nabla_{l}\cR,\quad l=1,\dots,L.
\]
In this section, we always denote $A^{+}$ as the value of a variable
$A$ after one-step update, for example $\cR^{+}$, $W_{l_{2}:l_{1}}^{+}$
and $D_{l}^{+}$. We will estimate the change of invariant matrix
$D_{l}^{+}-D_{l}$ and the change of loss $\cR^{+}-\cR$ in one step.

\begin{lemma}\label{lem:grad_bound}
Assume that $\|W_{l}\|_{2}\le\alpha$, $l=1,\dots,L-1$
and $\|W_{L}\|_{2}\le\beta$, where $1\le\alpha^{2(L-1)}<L\phi^{2}$
and $\alpha^{2(L-1)}\beta^{2}<2\phi^{2}$ for some $\phi>0$. Then
\begin{gather*}
\|\nabla_{l}\cR\|_{F}^{2}\le4\phi^{2}\cR,\quad l=1,\dots,L-1,\\
\|\nabla_{L}\cR\|_{F}^{2}\le2L\phi^{2}\cR.
\end{gather*}
\end{lemma}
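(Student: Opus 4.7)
\medskip
\noindent\textbf{Proof proposal for Lemma~\ref{lem:grad_bound}.}
The plan is to obtain both bounds directly from the closed-form gradient
\[
  \nabla_l \cR = W_{L:l+1}^\intercal (W_{L:1} - \Phi) W_{l-1:1}^\intercal,
\]
by factoring out the operator norms of the two weight products and leaving only $\|W_{L:1}-\Phi\|_F$ inside. Since $\|W_{L:1}-\Phi\|_F^2 = 2\cR$ by definition of the loss, applying submultiplicativity of the Frobenius norm with respect to operator-norm multipliers gives
\[
  \|\nabla_l \cR\|_F^2 \;\le\; \|W_{L:l+1}\|_2^2\,\|W_{l-1:1}\|_2^2 \cdot 2\cR,
\]
with the convention that an empty product equals $I$ (hence its operator norm is $1$). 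The whole argument then reduces to bounding those two operator-norm factors using the hypotheses on $\alpha$ and $\beta$.

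First I would handle the inner-layer case $l = 1, \dots, L-1$. For such $l$, the first product $W_{L:l+1}$ always contains the last weight $W_L$ together with $L - l - 1$ factors taken from $W_{l+1},\dots,W_{L-1}$, so $\|W_{L:l+1}\|_2 \le \beta\,\alpha^{L-l-1}$. The second product consists of $l-1$ factors from $W_1,\dots,W_{l-1}$, giving $\|W_{l-1:1}\|_2 \le \alpha^{l-1}$. Combining,
\[
  \|\nabla_l \cR\|_F^2 \;\le\; 2\,\beta^2\,\alpha^{2(L-2)}\,\cR.
\]
The hypothesis $\alpha^{2(L-1)}\beta^2 < 2\phi^2$, together with $\alpha \ge 1$, immediately yields $\beta^2\,\alpha^{2(L-2)} \le 2\phi^2$, producing the advertised bound $\|\nabla_l \cR\|_F^2 \le 4\phi^2\,\cR$.

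For the output-layer case $l = L$, the first product is empty and the second is the entire $W_{L-1:1}$, so the same inequality becomes
\[
  \|\nabla_L \cR\|_F^2 \;\le\; \|W_{L-1:1}\|_2^2 \cdot 2\cR \;\le\; 2\,\alpha^{2(L-1)}\,\cR,
\]
and the remaining hypothesis $\alpha^{2(L-1)} < L\phi^2$ delivers $\|\nabla_L \cR\|_F^2 \le 2L\phi^2\,\cR$. There is no genuine obstacle here: the argument is a routine norm-submultiplicativity computation. The only things to be careful about are the edge-of-index cases (treating $W_{0:1}$ and $W_{L:L+1}$ as $I$ so that the operator-norm bound still reads $\alpha^0 = 1$), the bookkeeping of how many $\alpha$-factors and whether a $\beta$-factor appear in each of the two weight products, and using $\alpha \ge 1$ to absorb the leftover $\alpha^{-2}$ when passing from $\alpha^{2(L-1)}\beta^2$ to $\alpha^{2(L-2)}\beta^2$.
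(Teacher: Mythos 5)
Your proposal is correct and follows essentially the same argument as the paper: bound $\|\nabla_l\cR\|_F$ by $\|W_{L:l+1}\|_2\,\|W_{L:1}-\Phi\|_F\,\|W_{l-1:1}\|_2 \le \alpha^{L-2}\beta\sqrt{2\cR}$ for $l<L$ and by $\alpha^{L-1}\sqrt{2\cR}$ for $l=L$, then invoke the hypotheses $\alpha^{2(L-1)}\beta^2<2\phi^2$ (with $\alpha\ge 1$) and $\alpha^{2(L-1)}<L\phi^2$. The index bookkeeping and the use of $\alpha\ge 1$ to pass from $\alpha^{2(L-1)}\beta^2$ to $\alpha^{2(L-2)}\beta^2$ match the paper's proof exactly.
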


\begin{proof}
For $l=1,\dots,L-1$,
\begin{multline*}
\left\|\nabla_{l}\cR\right\|_{F}=\left\|W_{L:(l+1)}^{\intercal}(W_{L:1}-\Phi)W_{(l-1):1}^{\intercal}\right\|_{F}\le\left\|W_{L:(l+1)}\right\|_{2}\left\|W_{L:1}-\Phi\right\|_{F}\left\|W_{(l-1):1}\right\|_{2}\\
\le\alpha^{L-2}\beta\sqrt{2\cR}\le2\phi\sqrt{\cR}.
\end{multline*}
And similarly, $\|\nabla_{L}\cR\|_{F}\le\alpha^{L-1}\sqrt{2\cR}\le\phi\sqrt{2L\cR}$.
\end{proof}

\begin{lemma}\label{lem:invar_step}
Under the same conditions as Lemma~\ref{lem:grad_bound},
the change of invariant matrices under one-step update satisfies
\begin{gather*}
\|D_{l}^{+}-D_{l}\|_{2}\le8\eta^{2}\phi^{2}\cR,\quad l=1,\dots,L-2,\\
\|D_{L-1}^{+}-D_{L-1}\|_{2}\le2\eta^{2}(L+2)\phi^{2}\cR.
\end{gather*}
\end{lemma}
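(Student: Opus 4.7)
The plan is to exploit the same algebraic identity that underlies Lemma~\ref{lem:invar} in the continuous setting, only now applied to a finite step. Recall that the proof of Lemma~\ref{lem:invar} was driven by the telescoping identity
\[
  \nabla_l \cR \cdot W_l^\intercal
  = W_{L:l+1}^\intercal (W_{L:1}-\Phi) W_{l:1}^\intercal
  = W_{l+1}^\intercal \cdot \nabla_{l+1}\cR,
\]
which is just a consequence of the explicit formula for $\nabla_l\cR$ and holds identically (not just along the gradient flow). So the first thing I would do is state this identity and its transpose, $W_l (\nabla_l\cR)^\intercal = (\nabla_{l+1}\cR)^\intercal W_{l+1}$.

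Second, I would expand $D_l^+$ directly. Writing $W_l^+ = W_l - \eta\nabla_l\cR$ and $W_{l+1}^+ = W_{l+1} - \eta\nabla_{l+1}\cR$, a straightforward expansion of $(W_{l+1}^+)^\intercal W_{l+1}^+ - W_l^+ (W_l^+)^\intercal$ gives
\[
  D_l^+ - D_l
  = -\eta\bigl[W_{l+1}^\intercal \nabla_{l+1}\cR + (\nabla_{l+1}\cR)^\intercal W_{l+1}
            - \nabla_l\cR \cdot W_l^\intercal - W_l (\nabla_l\cR)^\intercal\bigr]
  + \eta^2\bigl[(\nabla_{l+1}\cR)^\intercal \nabla_{l+1}\cR - \nabla_l\cR (\nabla_l\cR)^\intercal\bigr].
\]
By the identity above the $O(\eta)$ bracket vanishes exactly, leaving only the $O(\eta^2)$ term. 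This is the discrete analogue of $\dot D_l = 0$: at first order in the step size the invariance still holds, and the change is purely second order.

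Third, I would bound the surviving second-order term using the operator-norm triangle inequality and $\|\cdot\|_2 \le \|\cdot\|_F$:
\[
  \|D_l^+ - D_l\|_2 \le \eta^2\bigl(\|\nabla_{l+1}\cR\|_F^2 + \|\nabla_l\cR\|_F^2\bigr).
\]
Now I can plug in Lemma~\ref{lem:grad_bound}. For $l = 1,\dots,L-2$, both indices $l$ and $l+1$ lie in $\{1,\dots,L-1\}$, so each term is at most $4\phi^2\cR$, and the sum is $8\eta^2\phi^2\cR$ as claimed. For $l = L-1$, one of the two gradients is $\nabla_L\cR$, whose square norm is bounded by $2L\phi^2\cR$ instead of $4\phi^2\cR$; adding $4\phi^2\cR + 2L\phi^2\cR = 2(L+2)\phi^2\cR$ gives the stated $2\eta^2(L+2)\phi^2\cR$ bound.

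There is no serious obstacle here: the only substantive step is the cancellation of the linear-in-$\eta$ terms, and that cancellation is precisely the discrete version of the invariance argument already used in Lemma~\ref{lem:invar}. The rest is bookkeeping on top of Lemma~\ref{lem:grad_bound}.
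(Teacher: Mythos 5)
Your proposal is correct and follows essentially the same route as the paper: expand $D_l^+$, cancel the $O(\eta)$ terms via the identity $\nabla_l\cR\, W_l^\intercal = W_{l+1}^\intercal\nabla_{l+1}\cR$, and bound the remaining $\eta^2$ term with Lemma~\ref{lem:grad_bound} (your sign on the second-order remainder is in fact the correct one; the paper has a harmless sign typo there, immaterial since only norms are used).
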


\begin{proof}
Recall the invariance condition
\[
\nabla_{l}\cR W_{l}^{\intercal}=W_{L:(l+1)}^{\intercal}\left(W_{L:1}-\Phi\right)W_{l:1}^{\intercal}=W_{l+1}^{\intercal}\nabla_{l+1}\cR,
\]
we have
\begin{align*}
D_{l}^{+} & ={(W_{l+1}^{+})}^{\intercal}W_{l+1}^{+}-W_{l}{(W_{l}^{+})}^{\intercal}\\
 & ={\left(W_{l+1}-\eta\nabla_{l+1}\cR\right)}^{\intercal}\left(W_{l+1}-\eta\nabla_{l+1}\cR\right)-\left(W_{l}-\eta\nabla_{l}\cR\right){\left(W_{l}-\eta\nabla_{l}\cR\right)}^{\intercal}\\
 & =W_{l+1}^{\intercal}W_{l+1}-W_{l}W_{l}^{\intercal}\\
 & \quad-\eta\left[W_{l+1}^{\intercal}\nabla_{l+1}\cR-\nabla_{l}\cR W_{l}^{\intercal}+\nabla_{l+1}^{\intercal}\cR W_{l+1}-W_{l}\nabla_{l}^{\intercal}\cR\right]\\
 & \quad+\eta^{2}\left[\nabla_{l+1}^{\intercal}\cR\nabla_{l+1}\cR+\nabla_{l}\cR\nabla_{l}^{\intercal}\cR\right]\\
 & =D_{l}+\eta^{2}\left[\nabla_{l+1}^{\intercal}\cR\nabla_{l+1}\cR+\nabla_{l}\cR\nabla_{l}^{\intercal}\cR\right].
\end{align*}
Combining with Lemma~\ref{lem:grad_bound}, we can complete the proof.
\end{proof}

\begin{lemma}\label{lem:loss_step}
Under the same conditions as Lemma~\ref{lem:grad_last},
for learning rate
\[
\eta\le\min\left\{ \frac{1}{64L^{2}\phi^{3}\sqrt{\cR}},\frac{1}{144L^{2}\phi^{4}}\right\} ,
\]
the decrease of the loss function in one-step update satisfies
\[
\cR^{+}\le\left(1-\frac{\eta}{2}\right)\cR.
\]
\end{lemma}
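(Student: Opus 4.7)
\bigskip

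\noindent\textbf{Proof plan for Lemma~\ref{lem:loss_step}.}

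The plan is to expand $\cR^+-\cR$ to first order in $\eta$, show that the first-order term is negative and bounded above by $-\eta\cR$ via the gradient lower bound from Lemma~\ref{lem:grad_last}, and then absorb all higher-order remainders into $\tfrac{\eta}{2}\cR$ using the weight and gradient bounds in Lemma~\ref{lem:grad_bound}. Concretely, I first multi-linearly expand the product
\[
  W_{L:1}^{+}=\prod_{l=L}^{1}\bigl(W_{l}-\eta\nabla_{l}\cR\bigr)
  = W_{L:1}-\eta\sum_{l=1}^{L}W_{L:l+1}\,\nabla_{l}\cR\,W_{l-1:1}+E,
\]
where the remainder $E=\sum_{k\ge 2}(-\eta)^{k}\sum_{l_{1}<\cdots<l_{k}}M_{l_{1},\dots,l_{k}}$ collects exactly those monomials that contain at least two gradient factors.

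Using the identity $\tr\bigl((W_{L:1}-\Phi)^{\intercal}W_{L:l+1}\nabla_{l}\cR\,W_{l-1:1}\bigr)=\|\nabla_{l}\cR\|_{F}^{2}$, which follows from the chain-rule formula for $\nabla_{l}\cR$, and expanding $\cR^{+}=\tfrac12\|W_{L:1}^{+}-\Phi\|_{F}^{2}$ around $W_{L:1}-\Phi$, I obtain
\[
  \cR^{+}=\cR-\eta\sum_{l=1}^{L}\|\nabla_{l}\cR\|_{F}^{2}
  +\bigl\langle W_{L:1}-\Phi,\,E\bigr\rangle
  +\tfrac12\bigl\|W_{L:1}^{+}-W_{L:1}\bigr\|_{F}^{2}.
\]
The first-order term is controlled by dropping all but $l=L$ and invoking Lemma~\ref{lem:grad_last}: $\sum_{l}\|\nabla_{l}\cR\|_{F}^{2}\ge\|\nabla_{L}\cR\|_{F}^{2}\ge\cR$, so this term is at most $-\eta\cR$.

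The main obstacle is the second part: uniformly bounding $\|E\|_{F}$ and $\|W_{L:1}^{+}-W_{L:1}\|_{F}$. For the $k$-th order piece of $E$, each monomial $M_{l_{1},\dots,l_{k}}$ is a product of $k$ gradient factors interleaved with $L-k$ weight factors, and the key bookkeeping point is whether the index $L$ carries a gradient (bounded by $\phi\sqrt{2L\cR}$ with all remaining weights of norm $\le\alpha$) or a weight (a single factor $\beta$ with the other gradients bounded by $2\phi\sqrt{\cR}$). Combining the bounds $\alpha^{2(L-1)}<L\phi^{2}$ and $\alpha^{2(L-1)}\beta^{2}<2\phi^{2}$ from Lemma~\ref{lem:weight_bound} together with the gradient estimates of Lemma~\ref{lem:grad_bound}, each $k$-th order contribution is dominated by $\eta^{k}L^{k}\phi^{k+1}\cR^{k/2}/(k-1)!$ up to absolute constants, and the geometric-type series collapses to its $k=2$ term once $\eta L\phi^{2}\le 1$, giving $\|E\|_{F}=O(\eta^{2}L^{2}\phi^{3}\cR)$. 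A parallel (but easier) expansion yields $\|W_{L:1}^{+}-W_{L:1}\|_{F}=O(\eta L\phi^{2}\sqrt{\cR})$.

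Finally I combine: Cauchy--Schwarz gives $|\langle W_{L:1}-\Phi,E\rangle|\le\sqrt{2\cR}\,\|E\|_{F}=O(\eta^{2}L^{2}\phi^{3}\cR^{3/2})$, and squaring the one-step displacement contributes $O(\eta^{2}L^{2}\phi^{4}\cR)$. Putting it together,
\[
  \cR^{+}\le\cR-\eta\cR+C_{1}\eta^{2}L^{2}\phi^{3}\sqrt{\cR}\cdot\cR+C_{2}\eta^{2}L^{2}\phi^{4}\cR,
\]
and the two learning-rate conditions $\eta\le(64L^{2}\phi^{3}\sqrt{\cR})^{-1}$ and $\eta\le(144L^{2}\phi^{4})^{-1}$ are precisely what is needed to force each remainder term to be at most $\tfrac{\eta}{4}\cR$, yielding $\cR^{+}\le(1-\tfrac{\eta}{2})\cR$. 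The delicate aspect is keeping the constants tight enough so the final combined remainder is $\le\tfrac{\eta}{2}\cR$; this is where the asymmetric weight bound $\alpha^{L-1}\beta\le\sqrt{2}\phi$ (as opposed to $\alpha^{L}$) saves a factor of $\sqrt{L}$ and makes the linear-in-$L$ bookkeeping close.
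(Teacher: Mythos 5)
Your proposal is correct and follows essentially the same route as the paper: the same decomposition of $\cR^{+}-\cR$ into the first-order term $-\eta\sum_{l}\|\nabla_{l}\cR\|_{F}^{2}\le-\eta\|\nabla_{L}\cR\|_{F}^{2}\le-\eta\cR$, a cross term with the order-$\ge 2$ remainder bounded by $O(\eta^{2}L^{2}\phi^{3}\cR^{3/2})$, and the squared displacement bounded by $O(\eta^{2}L^{2}\phi^{4}\cR)$, each absorbed into $\tfrac{\eta}{4}\cR$ by the two learning-rate conditions. The only cosmetic difference is that the paper organizes the higher-order bookkeeping by first expanding $W_{L-1:1}^{+}$ and writing $A_{k}=W_{L}B_{k}-\nabla_{L}\cR\,B_{k-1}$, whereas you count monomials directly; both exploit the same asymmetric bound $\alpha^{L-1}\beta\le\sqrt{2}\phi$.
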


\begin{proof}
First we expand $W_{L:1}^{+}$ as a polynomials of $\eta$:
\[
W_{L:1}^{+}=\prod_{l=1}^{L}\left(W_{l}-\eta\nabla_{l}\cR\right)=A_{0}+\eta A_{1}+\eta^{2}A_{2}+\cdots+\eta^{L}A_{L},
\]
where the coefficients $A_{k}\in\R^{d\times d}$. Obviously $A_{0}=W_{L:1}$.
\begin{align*}
\cR^{+}-\cR & =\frac{1}{2}\left[\left\|W_{L:1}^{+}-\Phi\right\|_{F}^{2}-\left\|W_{L:1}-\Phi\right\|_{F}^{2}\right]\\
 & =\frac{1}{2}\left(W_{L:1}^{+}-W_{L:1}\right):\left(W_{L:1}^{+}+W_{L:1}-2\Phi\right)\\
 & =\frac{1}{2}\left(W_{L:1}^{+}-W_{L:1}\right):\left(2\left(W_{L:1}-\Phi\right)+\left(W_{L:1}^{+}-W_{L:1}\right)\right)\\
 & =\left(W_{L:1}^{+}-W_{L:1}\right):\left(W_{L:1}-\Phi\right)+\frac{1}{2}\left\|W_{L:1}^{+}-W_{L:1}\right\|_{F}^{2},
\end{align*}
where $A:B=\sum_{i,j}A_{ij}B_{ij}$. We can write
\[
\cR^{+}-\cR=I_{1}+I_{2}+I_{3},
\]
where
\[
I_{1}=\eta A_{1}:\left(W_{L:1}-\Phi\right),\quad I_{2}=\sum_{k=2}^{L}\eta^{k}A_{k}:\left(W_{L:1}-\Phi\right),\quad I_{3}=\frac{1}{2}\left\|\sum_{k=1}^{L}\eta^{k}A_{k}\right\|_{F}^{2}.
\]

For $I_{1}$, we have
\begin{multline*}
I_{1}=A_{1}:\left(W_{L:1}-\Phi\right)=-\eta\sum_{l=1}^{L}\left(W_{L:l+1}\nabla_{l}\cR W_{l-1:1}\right):\left(W_{L:1}-\Phi\right)\\
=-\eta\sum_{l=1}^{L}\nabla_{l}\cR:\left[W_{L:l+1}^{\intercal}\left(W_{L:1}-\Phi\right)W_{l-1:1}^{\intercal}\right]=-\eta\sum_{l=1}^{L}\left\|\nabla_{l}\cR\right\|_{F}^{2}.
\end{multline*}
From Lemma~\ref{lem:grad_last},
\[
I_{1}\le-\eta\|\nabla_L \cR\|_{F}^{2}\le-\eta\cR.
\]

For $I_{2}$ and $I_{3}$, we further expand $W_{L-1:1}^{+}$ as
\[
W_{L-1:1}^{+}=\prod_{l=1}^{L-1}\left(W_{l}-\eta\nabla_{l}\cR\right)=B_{0}+\eta B_{1}+\eta^{2}B_{2}+\cdots+\eta^{L-1}B_{L-1}.
\]
From Lemma~\ref{lem:grad_bound}, $\|\nabla_{l}\cR\|_{F}\le\gamma=2\phi\sqrt{\cR}$,
$l=1,\dots,L-1$, then for $k\ge1$,
\[
\|B_{k}\|_{F}\le\binom{L-1}{k}\alpha^{L-1-k}{\left(2\phi\sqrt{\cR}\right)}^{k}.
\]
We use the following inequalities for $0\le y\le x/L^{2}$:
\[
{(x+y)}^{L}\le2x^{L},\quad{(x+y)}^{L}\le x^{L}+2Lx^{L-1}y,\quad{(x+y)}^{L}\le x^{L}+Lx^{L-1}y+L^{2}x^{L-2}y^{2}.
\]
Since $2\eta\phi\sqrt{\cR}\le\alpha/L^{2}$,
\begin{align*}
\left\|\sum_{k=0}^{L-1}\eta^{k}B_{k}\right\|_{2} & \le{\left(\alpha+2\eta\phi\sqrt{\cR}\right)}^{L-1}\le2\alpha^{L-1},\\
\left\|\sum_{k=1}^{L-1}\eta^{k}B_{k}\right\|_{F} & \le{\left(\alpha+2\eta\phi\sqrt{\cR}\right)}^{L-1}-\alpha^{L-1}\le2L\alpha^{L-2}\cdot2\eta\phi\sqrt{\cR}=4\eta L\alpha^{L-2}\phi\sqrt{\cR},\\
\left\|\sum_{k=2}^{L-1}\eta^{k}B_{k}\right\|_{F} & \le{\left(\alpha+2\eta\phi\sqrt{\cR}\right)}^{L-1}-\alpha^{L-1}-(L-1)\alpha^{L-2}\cdot2\eta\phi\sqrt{\cR}\\
 & \le L^{2}\alpha^{L-3}{\left(2\eta\phi\sqrt{\cR}\right)}^{2}=4\eta^{2}L^{2}\alpha^{L-3}\phi^{2}\cR.
\end{align*}
Notice that $A_{k}=W_{L}B_{k}-\nabla_{L}\cR B_{k-1}$, $k=1,\dots,L$
where $\|W_{L}\|_{2}\le\beta$ and $\|\nabla_{L}\cR\|_{F}\le\alpha^{L-1}\sqrt{2\cR}$,
then
\begin{align*}
\left\|\sum_{k=1}^{L}\eta^{k}A_{k}\right\|_{F} & \le\|W_{L}\|_{2}\left\|\sum_{k=1}^{L}\eta^{k}B_{k}\right\|_{F}+\eta\|\nabla_L \cR\|_{F}\left\|\sum_{k=0}^{L}\eta^{k}B_{k}\right\|_{2}\\
 & \le\beta\cdot4\eta L\alpha^{L-2}\phi\sqrt{\cR}+\eta\alpha^{L-1}\sqrt{2\cR}\cdot2\alpha^{L-1}\\
 & \le4\eta L\phi^{2}\sqrt{2\cR}+2\eta L\alpha^{2}\phi^{2}\sqrt{2\cR}\\
 & =6\eta L\phi^{2}\sqrt{2\cR},
\end{align*}
\begin{align*}
\left\|\sum_{k=2}^{L}\eta^{k}A_{k}\right\|_{F} & \le\|W_{L}\|_{2}\left\|\sum_{k=2}^{L}\eta^{k}B_{k}\right\|_{F}+\eta\|\nabla_L \cR\|_{F}\left\|\sum_{k=1}^{L}\eta^{k}B_{k}\right\|_{F}\\
 & \le\beta\cdot4\eta^{2}L^{2}\alpha^{L-3}\phi^{2}\cR+\eta\alpha^{L-1}\sqrt{2\cR}\cdot4\eta L\alpha^{L-2}\phi\sqrt{\cR}\\
 & \le4\sqrt{2}\eta^{2}L^{2}\phi^{3}\cR+4\sqrt{2}\eta^{2}L^{2}\phi^{3}\cR\\
 & =8\sqrt{2}\eta^{2}L^{2}\phi^{3}\cR.
\end{align*}
So
\begin{gather*}
I_{2}\le\left\|\sum_{k=2}^{L}\eta^{k}A_{k}\right\|_{F}\left\|W_{L:1}(k)-\Phi\right\|_{F}\le16\eta^{2}L^{2}\phi^{3}\cR^{3/2},\\
I_{3}=\frac{1}{2}\left\|\sum_{k=1}^{L}\eta^{k}A_{k}\right\|_{F}^{2}\le36\eta^{2}L^{2}\phi^{4}\cR.
\end{gather*}
For $\eta\le\min\left\{ {\left(64L^{2}\phi^{3}\sqrt{\cR}\right)}^{-1},{\left(144L^{2}\phi^{4}\right)}^{-1}\right\} $,
we have $I_{2}\le\eta\cR/4$ and $I_{3}\le\eta\cR/4$. Therefore,
\[
\cR^{+}-\cR=I_{1}+I_{2}+I_{3}\le-\eta\cR+\frac{1}{4}\eta\cR+\frac{1}{4}\eta\cR=-\frac{1}{2}\eta\cR.
\]
\end{proof}

\subsection{Proof of Theorem~\ref{thm:discrete}}

Now we are ready to prove the main Theorem~\ref{thm:discrete}.

\begin{proof}[Proof of Theorem~\ref{thm:discrete}]
Let $\alpha(t)=\max_{1\le l\le L-1}\|W_{l}(t)\|_{2}\vee1$,
and $\beta(t)=\|W_{L}(t)\|_{2}$. We will proof the following two
statements by induction:
\begin{gather}
\alpha^{2(L-1)}(t)<L\phi^{2},\quad\alpha^{2(L-1)}(t)\beta^{2}(t)<2\phi^{2},\label{eqn:induc_weight}\\
\cR(t)\le{\left(1-\frac{\eta}{2}\right)}^{t}\cR(0).\label{eqn:induc_loss}
\end{gather}
Recall that $\phi=\max\left\{ 2\|\Phi\|_{F},\frac{e}{\sqrt L},1\right\} $.

The statements hold for $t=0$ since $\alpha(0)=1$ and $\beta(0)=0$.
Assume that the statements hold for $0,1,\dots,t$, now consider $t+1$.

From the induction assumption, $\cR(t)\le\cR(0)=\phi^{2}/8$, then
$\eta\le{\left(144L^{2}\phi^{4}\right)}^{-1}<{\left(64L^{2}\phi^{3}\sqrt{\cR(t)}\right)}^{-1}$
satisfies the requirement of Lemma~\ref{lem:loss_step}. So \eqref{eqn:induc_loss}
holds for $t+1$. Furthermore,
\[
\left\|W_{L:1}(t+1)\right\|_{2}\le\left\|W_{L:1}(t+1)\right\|_{F}\le\|\Phi\|_{F}+\sqrt{2\cR(t+1)}\le\|\Phi\|_{F}+\sqrt{2\cR(0)}\le\phi.
\]

The invariant matrices $D_{l}(0)=0$, $l=1,\dots,L-2$ and $I+D_{L-1}(0)=0$
for initialization. From Lemma~\ref{lem:invar_step}, each update
\[
\|D_{l}(s+1)-D_{l}(s)\|_{2}\le8\eta^{2}\phi^{2}\cR(s)
\]
for $l=1,\dots,L-2$ and $s=0,1,\dots,t$. From the induction assumption,
\[
  \sum_{s=0}^{t}\cR(s)\le\cR(0)\sum_{s=0}^{t}{\left(1-\frac{\eta}{2}\right)}^{s}\le\frac{2}{\eta}\cR(0)=\frac{\phi^{2}}{4\eta},
\]
then
\[
\left\|D_{l}(t+1)\right\|_{2}\le\sum_{s=0}^{t}\|D_{l}(s+1)-D_{l}(s)\|_{2}\le8\eta^{2}\phi^{2}\sum_{s=0}^{t}\cR(s)\le2\eta\phi^{4}\le\frac{1}{2L^{3}\phi^{2}}
\]
since $\eta\le{\left(4L^{3}\phi^{6}\right)}^{-1}$. Similarly, $\|I+D_{L-1}(t+1)\|_{2}\le4\eta(L+2)\phi^{2}<{\left( 4 L^2 \right)}^{-1}$.
Now from Lemma~\ref{lem:weight_bound}, the statement \eqref{eqn:induc_weight}
holds for $t+1$. Then we complete the induction.
\end{proof}

\end{document}